\theoremstyle{plain}
\newtheorem{theorem}{Theorem}[section]
\newtheorem{lemma}[theorem]{Lemma}
\theoremstyle{definition}
\newtheorem{definition}[theorem]{Definition}
\newtheorem{example}[theorem]{Example}
\newtheorem{axiom}[theorem]{Axiom}
\crefname{axiom}{Axiom}{Axioms}
\theoremstyle{remark}
\newcommand{\A}{\mathcal{A}}
\newcommand{\R}{\mathbb{R}}
\newcommand{\E}{\mathbb{E}}
\newcommand{\Emath}{\mathop{\mathbb{E}}}
\newcommand{\st}{\mathbin{|}} 
\DeclareMathOperator\erf{erf}
\DeclareMathOperator{\cost}{cost}
\DeclareMathOperator{\revenue}{revenue}
\newcommand{\ind}{\mathbbm{1}}
\newcommand{\prefgt}[1]{\mathrel{\succ_{#1}}}
\newcommand{\prefgteq}[1]{\mathrel{\succeq_{#1}}}
\newcommand{\prefeq}[1]{\mathrel{\simeq_{#1}}}
\icmltitlerunning{Formalizing Preferences Over Runtime Distributions}
\begin{document}

\twocolumn[
\icmltitle{Formalizing Preferences Over Runtime Distributions}
\icmlsetsymbol{equal}{*}

\begin{icmlauthorlist}
\icmlauthor{Devon R. Graham}{ubc}
\icmlauthor{Kevin Leyton-Brown}{ubc}
\icmlauthor{Tim Roughgarden}{columbia,a16z}
\end{icmlauthorlist}

\icmlaffiliation{ubc}{Department of Computer Science, University of British Columbia, Vancouver, BC}
\icmlaffiliation{columbia}{Department of Computer Science, Columbia University, New York, New York}
\icmlaffiliation{a16z}{a16z crypto}

\icmlcorrespondingauthor{Devon R. Graham}{drgraham@cs.ubc.ca}

\icmlkeywords{Algorithm configuration, Algorithm selection, Expected utility, Optimization}

\vskip 0.3in
]

\printAffiliationsAndNotice{}  

\begin{abstract}
When trying to solve a computational problem, we are often faced with a choice between algorithms that are guaranteed to return the right answer but differ in their runtime distributions (e.g., SAT solvers, sorting algorithms). This paper aims to lay theoretical foundations for such choices by formalizing preferences over runtime distributions. It might seem that we should simply prefer the algorithm that minimizes expected runtime. However, such preferences would be driven by exactly how slow our algorithm is on bad inputs, whereas in practice we are typically willing to cut off occasional, sufficiently long runs before they finish. We propose a principled alternative, taking a utility-theoretic approach to characterize the scoring functions that describe preferences over algorithms. These functions depend on the way our value for solving our problem decreases with time and on the distribution from which captimes are drawn. We describe examples of realistic utility functions and show how to leverage a maximum-entropy approach for modeling underspecified captime distributions. Finally, we show how to efficiently estimate an algorithm's expected utility from runtime samples.
\end{abstract}

\section{Introduction}\label{sec:intro}

Imagine that we need to solve a series of instances of a given computational problem such as SAT, MIP, TSP, or sorting, and we have a set of different algorithms to choose from. Suppose further that all these algorithms are guaranteed to return the correct solution but vary in the runtime they will take. Which algorithm should we choose? The obvious answer would be to choose the algorithm that returns the solution most quickly on average. But average runtime can be dominated by extremely rare, extremely long runtimes, making the task of choosing between algorithms rely on certifying the nonexistence of rare but very costly tail events. Does this describe the way we actually make choices between algorithms? Probably not. Consider the following example.

\begin{example}[Motivation]\label{ex:motivating}
    Suppose that we have 100 integer programs to solve and two algorithms to chose from. Algorithm $A$ solves the first 99 problems in 1 second, but runs the 100\textsuperscript{th} problem for 10 days without solving it. Algorithm $B$ runs all 100 problems for 10 days each without solving any of them.
\end{example}

In this case, we imagine that most readers would strongly prefer Algorithm $A$ to Algorithm $B$. However, the algorithms' average runtimes are unconstrained by the information given; e.g., $B$ could solve every problem in a bit more than 10 days, whereas $A$ could take 100 years to solve the last problem. Furthermore, imagine that both $A$ and $B$ contain a bug that prevents their long runs from terminating at all. This would cause both averages to become infinite, but we doubt that it would make most readers indifferent between the algorithms. Such preferences are not described by an average runtime scoring function. 

We can learn a second lesson from this example: we are at least sometimes able to express preferences between algorithms even when some runs are ``capped'' (i.e., right censored). Intuitively, the reason is that while the runtime of an algorithm may be large or unbounded, the impact it has on us is not: we will eventually terminate any sufficiently long run and rely instead on some backup plan. Thus, one algorithm run that would take a thousand years and another that would take a trillion years are functionally equivalent: neither stands any chance of running until completion. A delivery service must route vehicles, but cannot wait so long that solving the optimization problem delays its deliveries; at some point, it must send the drivers out to do the best they can. A chip manufacturer using a verification procedure to test a new CPU design can wait much longer for a solution---and probably never faces one moment at which waiting a bit longer would not be acceptable---but as time passes, managers or stakeholders will demand results and the risk of being scooped by a competitor will increase. At some point the manufacturer must decide either to ship their product unverified or to scrap the project entirely. 

Practitioners really do face such choices between algorithms. For the task of automated algorithm design, the field has seen a trend in past decades away from hand-crafting heuristic algorithms. Instead, the selection of algorithms optimized for practical performance is increasingly being approached as a machine learning problem: given an instance distribution, highly parameterized algorithms are configured to maximize empirical performance, just as a classifier's parameters are chosen to minimize empirical risk. Prominent examples of this approach include heuristic approaches for building algorithm portfolios \citep{Ric76,HubEtAl97,gomes01,Horvitz01,boosting-IJCAI,SATzilla-Full}, performing algorithm configuration, \citep{birattari2002racing,hutter2009paramils,hutter2011sequential,ansotegui2009gender,lopez2016irace} and beyond \citep{LagLit01,GagSch06,xu-aaai10a,kadioglu-ecai10,seipp-aaai15a}. There is also a growing literature on theoretically-grounded methods that offer performance guarantees \citep{gupta2017pac,kleinberg2017efficiency,kleinberg2019procrastinating,balcan2017learning,balcan2021much,WeGySz18,weisz2019capsandruns,Weisz0LGLSL20}. This shift to learning new algorithms rather than designing them by hand makes the choice of loss function crucial: as with any optimization problem, varying the objective function profoundly impacts which solution is returned. 

Contests for evaluating the practical performance of solvers for NP-hard problems are increasingly found as part of major AI conferences. Naturally, these require a scoring metric by which algorithms can be compared. Organizers of these competitions must specify scoring functions by which to evaluate algorithms, and are clearly aware that different metrics will lead to different rankings. It is trivial to define metrics that change the way algorithms are ranked. The difficulty is to define metrics that properly reflect what the organizers think of as ``good'' algorithms. The approach in the International SAT Competition\footnote{\url{http://www.satcompetition.org/}.} has essentially been to choose a metric and then retrospectively evaluate the rankings it leads to. The organizers have shown a clear awareness of the way different scoring metrics can affect rankings. Consider the SAT Competition of 2009.\footnote{\url{http://www.satcompetition.org/2009/spec2009.html}.} We quote part of their rationale for considering different metrics: \emph{One interesting question is if the speed of a faster solver can compensate for its failure to solve an instance. For example, assume solver A can solve 100 instances in 1000s (cumulated time) and solver B can solve the same 100 instances in only 100s. At this point, solver B is clearly better than solver A. Now assume solver A can solve one more instance than B. Which solver is the best? The answer is probably not unique and certainly depends on the user's applications and expectations.}

Some competitions have given up on finding the right evaluation metric altogether, and resorted to judging algorithm performance by ``a jury consisting of researchers with experience in computational optimization'',\footnote{\url{https://www.mixedinteger.org/2022}.} which highlights rather keenly the need for more principled foundations. Instead of choosing a scoring function and then asking in a vague way if it satisfies our idea of what it means to be a ``good'' algorithm, our paper provides a framework within which decision-makers such as the organizers of competitions can define scoring metrics \emph{a priori} that capture the properties of algorithms they actually care about.

Various other scoring functions have been widely used in the empirical algorithmics literature. One common choice is to score algorithms according to their \emph{capped} average runtime, as in the literature on black-box algorithm configuration approaches offering theoretical guarantees \citep{kleinberg2017efficiency,kleinberg2019procrastinating,WeGySz18,weisz2019capsandruns,Weisz0LGLSL20}. However, this treats capped runs as being virtually the same as runs that completed just before the captime. To address this, we can count runs that reach the captime $\kappa$ as having taken $c\cdot\kappa$ seconds, yielding a Penalized Average Runtime (PAR) \citep{hutter2009paramils} (where, e.g., $c=2$, as in the 2021 SAT Competition \citep{froleyks2021sat}). However, $c$ is an arbitrary parameter to choose, and when $c>1$ an algorithm's penalized runtime can actually \emph{fall} as the captime increases, since fewer runs will cap. 

Other scoring functions that balance the risk of timeouts with the desirability of short runs have been explored using the tools of survival analysis \cite{tornede2020run2survive}. For instance, minimizing the expectation of runtime raised to a large exponent will penalize long runs more heavily than short runs, thus favouring algorithms that are less at risk of timing out. A goal of our work is to formalize the reasoning behind using such functions.

This paper seeks to formalize preferences over runtime distributions using an axiomatic approach. In \cref{sec:axioms} we present six constraints on preferences over runtime distributions and prove that these axioms imply a general rule which describes our preferences in general. Our technical approach draws on the expected utility construction of \citet{vonneumann1944theory} with two key modifications. First, we add two additional runtime-specific axioms, amounting to the assertion that we actually care about solving our problem and that we will tend to want to solve it more quickly. Second, the fact that runs can be censored at arbitrary points requires some nontrivial changes to the classical axioms. In \cref{sec:using} we work through instantiations of our utility functions for different practical applications and show how to choose captime distributions in settings where these are not known exactly by maximizing entropy. In \cref{sec:sampling}, we show that the time required to $\epsilon$-estimate the score of an algorithm from capped samples depends on $\epsilon$ and on the inverse of our utility function, but not on the algorithm's average runtime. Finally, in \cref{sec:results} we present some real-world examples where the choice of utility function really is important and changes our conclusions about which algorithm is considered ``best.''


\section{Preferences Over Runtime Distributions}\label{sec:axioms}

Let $\mathcal{I}$ be a probability distribution over instances of some computational problem. Let $\A$ be a set of (potentially randomized) algorithms, where each $A \in \A$ either runs forever or produces a solution of identical quality\footnote{We consider algorithms that return different quality solutions in \cref{app:solutionquality}, but this extension sheds little additional light on the problem since it must simply appeal to the existence of a runtime/quality tradeoff function. The binary setting we describe here is in some sense fundamental; it arises often in practice (e.g., any decision problem), and our formalism and results are easily extended to the general solution-quality case.} when given any input sampled from $\mathcal{I}$. For a given algorithm $A$ we will use $t$ to denote the algorithm's running time when presented with an instance from $\mathcal{I}$; $t$ is thus a random value that depends on the instance sampled from $\mathcal{I}$ and on any other source of randomness (e.g., on the choice of random seed, on allocation choices made by the operating system, etc.). Some algorithms may fail to terminate on some inputs or with some random seeds; in such scenarios $t$ is infinite. Each algorithm is thus associated with a \emph{runtime distribution}, a probability distribution over the positive extended real numbers $[0, \infty]$. We overload notation and identify each algorithm $A$ with its runtime distribution,\footnote{We will have no further need of $\mathcal{I}$ beyond its role in defining runtime distributions.} because this is the only fact about each algorithm that concerns us.
It is useful to consider algorithms which always take a fixed, deterministic amount of time regardless of their input or random seed. The runtimes of such algorithms are Dirac delta distributions; we use $\delta_x$ to denote the distribution that returns $x$ with probability 1. 

Let $K$ be a probability distribution over the amount of time we will have to run our algorithm; we denote a captime sampled from $K$ as $\kappa$. $K$ can be deterministic: $K = \delta_{\kappa}$. For any distribution $X$, we use the the function $F_X$ to denote the cumulative distribution function (CDF) of $X$. 

Our goal is to define a scoring function that represents our preferences for elements from $\A$. This will turn out to correspond with the expectation of a utility function. 
\citet{vonneumann1944theory} showed how to derive a real-valued scoring function over arbitrary discrete outcomes from a set of more basic assumptions about the properties of a preference relation. We employ many of the same building blocks to derive a scoring function appropriate for our setting. Our first four axioms correspond closely to theirs, although \cref{axiom:independence} in particular is subtly different. Beyond this, we introduce two additional axioms that are natural in the runtime setting.

Given captime distribution $K$, let $\prefgteq{K}$ be a binary relation over pairs of elements of $\A$ that describes our preferences among algorithms (runtime distributions) when faced with captime distribution $K$. For $A,B \in \A$, we will use $A \prefgteq{K} B$ to denote the proposition that we weakly prefer algorithm $A$ to algorithm $B$, given captime distribution $K$. Similarly, $A \prefgt{K} B$ denotes the proposition that we strictly prefer algorithm $A$ to $B$ given $K$, and $A \prefeq{K} B$ denotes the proposition that we are indifferent between the two.\footnote{The relations $\prefgt{K}$ and $\prefeq{K}$ are used only for notational convenience and are derived from $\prefgteq{K}$: we have $A \prefgt{K} B$ iff $A \prefgteq{K} B$ and not $B \prefgteq{K} A$; similarly, $A \prefeq{K} B$ iff $A \prefgteq{K} B$ and $B \prefgteq{K} A$.} What properties should we insist that our relation $\prefgteq{K}$ must have? Our first axiom asserts that our relation is acyclic. 

\begin{axiom}[Transitivity]\label{axiom:transitivity} If $A \prefgteq{K} B$ and $B \prefgteq{K} C$, then $A \prefgteq{K} C$.
\end{axiom}

To explain our second axiom, we must introduce notation to describe how algorithms (distributions) can be combined to form new algorithms (compound distributions), for example by creating a new algorithm that uses coin flips to decide which of a set of existing algorithms to run. We define two operations that describe the way some of these combinations take place. For $0 \geq p \geq 1$ the \emph{mixing} operation creates convex combinations of runtime distributions. We use the notation $[p : A, (1-p) : B]$ to denote the new runtime distribution induced by drawing a runtime from $A$ with probability $p$ and from $B$ with probability $1 - p$. The next two axioms describe our preferences over such mixtures. Monotonicity says that if we prefer distribution $A$ to distribution $B$, then we prefer mixtures over $A$ and $B$ that give more weight to $A$ than $B$. That is, we prefer mixtures that give us ``more of a good thing.''

\begin{axiom}[Monotonicity]\label{axiom:monotonicity} If $A \prefgteq{K} B$ then for any $p,q \in [0,1]$ we have ${[p:A , (1-p):B]} \prefgteq{K} {[q:A , (1 - q):B]}$ if and only if $p \ge q$.
\end{axiom}

Continuity says that whenever we have preferences over three runtime distributions, there exists a mixture between the most- and least-preferred distributions that makes us indifferent between that mixture and the middle distribution.

\begin{axiom}[Continuity]\label{axiom:continuity} If $A \prefgteq{K} B \prefgteq{K} C$, then there exists $p \in [0,1]$ such that $B \prefeq{K} {[p:A , (1-p):C]}$.
\end{axiom}

Given any mapping $M$ that associates a distribution $M(t, \kappa) \in \A$ with the runtime--captime pair $t, \kappa$, the \emph{compounding} operation constructs a new distribution $[M(t, \kappa) \st t \sim A, \kappa \sim K] \in \A$, which first draws $t$ from $A$ and $\kappa$ from $K$, then returns a runtime drawn from the corresponding $M(t, \kappa)$. (We might think of this compound distribution as an algorithm that first samples values for $t$ and $\kappa$ from $A$ and $K$, then runs a master algorithm with some corresponding parameter configuration, giving $M(t, \kappa)$.) Our next axiom describes the way our preferences for sure outcomes and captimes affect our preferences for general algorithms and captime distributions by relating them through compound distributions. This axiom is commonly called Independence because of the way it assures us that our preferences for distributions can be determined from the parts of those distributions, \emph{independently} of one another; no confounding factors are created when we nest distributions. 

\begin{axiom}[Independence]\label{axiom:independence}
    If $\delta_t \prefeq{\delta_{\kappa}} M(t, \kappa)$ for all $t, \kappa$, then $A \prefeq{K} [M(t, \kappa) \st t \sim A, \kappa \sim K]$.\footnote{We state Independence this way for clarity of notation, but strictly it only needs to hold when $M(t, \kappa) = \big[ p: \delta_0, (1- p): \delta_{\infty} \big]$ for some $p$ (that may depend on $t$ and $\kappa$).}
\end{axiom}

Observe that if $M(t, \kappa) = \delta_t$ then it is trivially obvious why Independence should hold because the compound distribution $[\delta_t \st t \sim A, \kappa \sim K]$ is exactly equal to $A$. When $M(t, \kappa) \prefeq{\delta_{\kappa}} \delta_t$ but $M(t, \kappa) \not= \delta_t$, Independence says that if $t$ is drawn from $A$ and $\kappa$ is drawn from $K$, then we should be indifferent (under captime distribution $K$) between a runtime of $t$ and a runtime drawn from $M(t, \kappa)$, since this is precisely what it means to be indifferent (under a sure captime of $\kappa$) between $\delta_t$ and $M(t, \kappa)$. In other words, if we are indifferent between each of a set of outcome pairs,
we are also indifferent between mixtures that equally weight respective elements of these pairs. 

Our first three axioms can be found in the classical von Neumann-Morgenstern setup. Independence has an analogue in the classical setup but needs some adjustments to address the facts that (a) base outcomes already correspond to distributions from which we can sample; (b) our preference relation is defined with respect to a captime distribution; and (c) the axiom relates preferences under deterministic captime distributions $\delta_\kappa$ to those under a general distribution $K$. 
Please see \cref{app:independence} for an extended discussion of the history of this axiom and our own variant's relationship to this history, as well as a survey of the literature on why preferences might violate this axiom and an argument that the runtime setting is different (because it introduces the ability to limit losses via capping). Together, these four axioms already suffice to show the existence of a utility function whose expectation captures our preference relation. 

We now introduce two further axioms, which capture additional properties inherent in our runtime distribution setting and hence constrain the utility function's form. First, Eagerness says that a deterministic algorithm is preferable to any algorithm that always takes at least as long. 

\begin{axiom}[Eagerness]\label{axiom:eagerness} For any $t \le t'$, if the support of $A$ is contained in $[t, t']$, then $\delta_t \prefgteq{K} A \prefgteq{K} \delta_{t'}$.
\end{axiom}

Second, the Relevance axiom states that we strictly prefer deterministically solving our problem to deterministically failing to solve it.

\begin{axiom}[Relevance]\label{axiom:relevance} $\delta_t \prefgt{\delta_{\kappa}} \delta_{t'}$ for all $t < \kappa \le t'$.
\end{axiom}

These axioms imply that our preferences for algorithms correspond to a scoring function. Before stating our main theorem, we define an important function $p$ that describes our propensity for risk.

\begin{definition}\label{def:p}
    For any $t$ and $\kappa$, the function $p : \R^+ \times \R^+ \to [0,1]$ is defined as follows. Set $p(t, \kappa) = 0$ if $t \ge \kappa$, and otherwise set $p(t, \kappa)$ to be the value that satisfies $\delta_{t} \;\prefeq{\delta_{\kappa}}\; \big[ p(t, \kappa) :  \delta_0 \,,\, \big(1 - p(t, \kappa) \big) : \delta_{\infty} \big]$.
\end{definition}
Since Eagerness tells us that $\delta_{0} \prefgteq{\delta_{\kappa}} \delta_{t} \prefgteq{\delta_{\kappa}} \delta_{\infty}$ when $t < \kappa$, Continuity ensures that $p(t, \kappa)$ exists for all $t,\kappa$, and Monotonicity ensures that it is unique. So, $p$ is well-defined.

We can now state that $p$ is essentially (i.e., up to affine transforms) the only utility function that corresponds with our preferences, and can infer certain properties of its shape.

\begin{theorem}\label{thm:main}
    If our preferences follow \cref{axiom:relevance,axiom:transitivity,axiom:continuity,axiom:monotonicity,axiom:independence,axiom:eagerness}, then a function $u$ satisfies 
    \begin{small}
    \begin{align}\label{eqn:thm:main:uutil}    
        A \;\prefgteq{K}\; B  \iff \Emath_{t \sim A, \kappa \sim K}\big[u(t , \kappa)\big] \ge \Emath_{t \sim B, \kappa \sim K}\big[u(t , \kappa)\big]
    \end{align}
    \end{small}for any runtime distributions $A$ and $B$ and any timeout distribution $K$ if and only if there are constants $c_0$ and $c_1 > 0$ such that $u(t, \kappa) = c_1 p(t, \kappa) + c_0$. Furthermore, $p(0, \kappa) = 1$ (maximum achieved at $t=0$), $p(t, \kappa) \ge p(t', \kappa)$ for all $t \le t'$ (monotonically decreasing), $p(t, \kappa) > 0$ for all $t < \kappa$ (strictly positive), $p(\kappa, \kappa) = 0$ (minimum achieved at $t=\kappa$). 
\end{theorem}

Please see \cref{app:proofs} for the proof. The first three von Neumann-Morgenstern axioms (\cref{axiom:transitivity,axiom:monotonicity,axiom:continuity}) imply the existence of the function $u$. The fourth, \cref{axiom:independence}, implies that the utility of an algorithm is the expectation of $u$ over the algorithm's runtime distribution. The final two, novel axioms (\cref{axiom:eagerness,axiom:relevance}) give the function $u$ its particular form. 

The function $p$ is our ``fundamental'' utility function. Any other valid utility function must be a positive linear transformation of $p$. (It can be seen from linearity of expectation that the constants $c_1$ and $c_0$ in no way affect the ordering of our preferences.) The value $p(t, \kappa)$ reflects our feelings about an algorithm run that takes $t$ seconds when $\kappa$ seconds were available, and corresponds to a measure of how happy we are when faced with a gamble that either gives us our solution immediately or requires us to spend $\kappa$ seconds to learn nothing. We can consider possible forms of $p$ by reasoning about our feelings regarding (i) spending $t$ seconds to solve the problem when $\kappa$ seconds were available, (ii) wasting $\kappa$ seconds to accomplish nothing, and (iii) how important it is to solve our problem. Having thus fixed a specific form for $p$, our algorithm scoring function will be the expected value of $p$ with respect to runtime distribution $A$ and captime distribution $K$. 

Since this paper is ultimately about formalizing our preferences between \textit{algorithms}, it is natural to consider the case where we can learn about runtime distributions only through sampling (discussed in \cref{sec:sampling}), but where we do know something about the captime $\kappa$---its mean, say, or the fixed and bounded range in which it falls. The next section discusses how we might give specific form to the function $p$ in practice.


\section{Instantiating Utility Functions in Practice}\label{sec:using}

We now present a series of examples with the aim of guiding anyone who wishes to apply our framework to their own particular setting. We present these in the context of two key questions we should ask ourselves: (1) Are we sure about what captime we will face?; and (2) Does an immediate solution give us the same utility as a solution later in the future?

\subsection{The Case of Known Captime Distributions}

We begin with the simplest scenario, where the answer to both of the above questions is affirmative. 

\begin{example}[Known captime, step-function utility]\label{ex:known-constant}
    Suppose we know we face a fixed captime $\kappa_0$ (i.e., $K = \delta_{\kappa_0}$) and we will receive the same value as long as we solve our problem before the captime. If we set $c_1 = 1$ and $c_0 = 0$, we have that $u(t, \kappa) = 1$ for $t < \kappa$ and $0$ otherwise. So our score for an algorithm $A$ is $F_A(\kappa_0)$, the value of $A$'s CDF at $\kappa_0$. In other words, the best algorithm for this utility function is simply the one that is most likely to finish. Considering again the motivating scenario of \cref{ex:motivating}, algorithm $A$ is at least as good as algorithm $B$ for any $\kappa_0 < 10$ days (if $\kappa_0 \ge 10$ days, the problem is indeterminate). 
\end{example}

The scoring function in \cref{ex:known-constant} is simple and intuitive, and similar metrics are commonly used in practice (under names like ``number of instances solved''). Choosing to optimize this binary utility function implies that we are indifferent between learning an answer to our problem immediately and learning it $t$ seconds (or minutes or hours) from now, so long as $t$ is less than $\kappa_0$. Next, if we have to pay for runtime and get paid for solving our problem, then we might instead be interested in how much money we can make from an algorithm.

\begin{example}[Known captime, linear utility for compute time]\label{ex:known-lincompute}
	Suppose that we face a fixed and known captime $\kappa_0$, that we pay $\alpha$ dollars for each second of compute, and that we earn $v$ dollars if we are able to solve our problem. Further suppose that we have a linear value for money and that money is the only variable we care about. If we complete a run in $t < \kappa_0$ seconds, we earn $v - \alpha \cdot t$ dollars. If the run caps, we lose $-\alpha \cdot \kappa_0$ dollars. So our utility function is $u(t,\kappa_0) = v - \alpha \cdot t$ if $t < \kappa_0$ and $u(t, \kappa_0) = -\alpha \cdot \kappa_0$ otherwise. Since $u(t,\kappa) = c_1 p(t,\kappa) + c_0$, by setting $c_1 = v + \alpha \cdot \kappa_0$ and $c_0 = -\alpha \cdot \kappa_0$ we can normalize to the interval $[0,1]$. We find that $p(t,\kappa_0) = \frac{v + \alpha \cdot (\kappa_0 - t)}{v + \alpha \cdot \kappa_0}$ if $t < \kappa_0$, and $p(t, \kappa_0)$ = 0 otherwise. 
\end{example}

We can see the three aspects of the function $p$ (mentioned at the end of \cref{sec:axioms}) explicitly: (i) spending $t$ seconds to solve the problem costs us $-\alpha \cdot t$, (ii) wasting $\kappa_0$ seconds costs $-\alpha \cdot \kappa_0$, and (iii) the importance of solving the problem is given by $v$, the value we gain from solving it. To see the full strength of our method we can compare the above scoring function to the simple capped average runtime, which is commonly used in practice and may seem like an obvious choice of objective, but which actually implies a logical contradiction. If runtime is what we care about, then whether each run caps or not, our utility is proportional to the time of the run (perhaps also penalizing runs that cap, e.g., PAR10). This gives the utility function $u(t, \kappa_0) = -t$ if $t < \kappa_0$, and $u(t,\kappa_0) = - \kappa_0$ otherwise. Choosing to optimize this capped average has a strange implication that can be seen when we compare this utility function to the one above. Scaling so that the parameter $\alpha = 1$, the only difference between these two utility functions is the parameter $v$, the value we gain from solving our problem. For the capped average objective, this term is 0. This amounts to the assertion that we gain no value from solving our problem! Clearly this is ridiculous (otherwise, why solve it?). We may also be interested in considering more general cost and value functions, as in the next example.

\begin{example}[Known captime, linear utility for money]\label{ex:known-linmoney}
    Suppose that we face a fixed and known captime $\kappa_0$, that we pay $\cost(t)$ dollars for $t$ seconds of compute, and that if we are able to solve our problem within $t$ seconds, we can sell the answer and earn $\revenue(t)$ dollars. So if we solve our problem in $t < \kappa_0$ seconds, we will earn a profit of $\revenue(t) - \cost(t)$ dollars; if not, we will lose $- \cost(\kappa_0)$ dollars. If we only care about money, and we care about money in a linear way, then our utility is simply proportional to the number of dollars we earn. Setting $c_1 = \revenue(0) + \cost(\kappa_0) - \cost(0)$ and $c_0 = - \cost(\kappa_0)$, we find that $p(t,\kappa_0) = \frac{\revenue(t) + \cost(\kappa_0) - \cost(t)}{\revenue(0) + \cost(\kappa_0) - \cost(0)}$ if $t < \kappa_0$, and $p(t, \kappa_0) = 0$ otherwise.
\end{example}

The preference for money in \cref{ex:known-linmoney} implies a specific utility function. The denominator $\revenue(0) + \cost(\kappa_0) - \cost(0)$ can be interpreted as money ``on the table''; $\revenue(0)$ is the maximum we stand to earn and $\cost(\kappa_0) - \cost(0)$ is the maximum cost we can save. The numerator $\revenue(t) + \cost(\kappa_0) - \cost(t)$ represents the portion of this total available money that we were actually able to collect by finishing our run at time $t$. So $p(t, \kappa_0)$ is the proportion of the total available money that we were able to earn. We can again see the three aspects of our risk preferences mentioned above: (i) the value of spending $t$ of $\kappa_0$ seconds to solve our problem is $\revenue(t) - \cost(t)$; (ii) the loss from wasting $\kappa_0$ seconds to accomplish nothing is $\cost(\kappa_0)$; and (iii) the value of the information to be gained from solving our problem is $\revenue(0) - \cost(0)$. We can generalize the above to arbitrary functions.  

\begin{example}[Known timeout, general utility]\label{ex:known-general}
    Suppose $V(t, \kappa_0)$ is some arbitrary benefit we assign to spending $t$ out of the $\kappa_0$ available seconds to solve our problem, $W(\kappa_0)$ is some arbitrary loss we assign to wasting $\kappa_0$ seconds to accomplish nothing, and $V(0, \kappa)$ is the benefit we gain from solving our problem. Setting $c_1 = V(0, \kappa_0) + W(\kappa_0)$ and $c_0 = -W(\kappa_0)$, we find that $p(t, \kappa_0) = \frac{V(t, \kappa_0) + W(\kappa_0)}{V(0, \kappa_0) + W(\kappa_0)}$ for $t < \kappa_0$, and $p(t, \kappa_0) = 0$ otherwise. 
\end{example}

We now turn to the case where we are uncertain about captime (i.e., where $K$ is not deterministic) in the special case of step-function utilities described in \cref{ex:known-constant}. We do this both because this is the simplest case and because we often really do not care about how long an algorithm run takes given that it completes before our captime (e.g., we do not pay for compute or electricity; our solution will not be used until some point in the future anyway). Indeed, we will see that captime distributions can induce utility functions whose expected values decrease smoothly with time even in this case, and so captime distributions can be seen as one principled way of modeling the way utility depends on time.

\begin{example}[Unknown captime, step-function utility]\label{ex:unknown-constant}
    Suppose we have a step-function utility as in \cref{ex:known-constant} but face captime distribution $K$. Setting $c_1= 1$ and $c_0 = 0$ again gives $u(t, \kappa) = 1$ for $t < \kappa$ and 0 otherwise, but this time $u$ is not fixed with respect to $\kappa$. Our scoring function for an algorithm $A$ is the expectation over $A$'s runtime of the probability that $A$ will finish before it times out, i.e., $K$'s survival function. Thus, algorithm $A$'s score is given by $\Emath_{t \sim A} \big[ 1 - F_K(t) \big]$.
\end{example}

Indeed, the function $F_K$ plays an integral part in utility functions beyond the step-function case as well. For example, suppose we model $\kappa$ as being distributed uniformly in the range from $0$ to $\kappa_0$ seconds. Then the distribution $K$ has CDF $F_{unif}(t) = t / \kappa_0$ for $t < \kappa_0$ and $F_{unif}(t) = 1$ otherwise. The utility function whose expectation we should maximize in this case is linear: $u_{unif}(t) = 1 - \frac{t}{\kappa_0}$ if $t < \kappa_0$, and $u_{unif}(t) = 0$ otherwise. 

\subsection{The Case of Underspecified Captime Distributions}\label{sec:maxent}

Often, we may know some properties of $K$ without knowing it exactly. In such cases, we advocate employing the method of maximum entropy of \citet{jaynes1957information}, which allows us to incorporate such knowledge without introducing additional, unjustified assumptions. Entropy is a measure of the average information content that would be revealed to us by observing some unknown quantity. The principle of maximum entropy tells us to use the distribution for which this value is highest, subject to satisfying whatever restrictions we have. It allows us to impose only the assumptions we want to make when assigning probability mass to our captime distribution. In this sense, the principle of maximum entropy is an application of Occam's razor. We can consider how different pieces of information imply different distributions $K$ that affect the form of our algorithm scoring function through their CDF $F_K(t)$. 

The simplest restriction our prior information could place on the distribution $K$ is a bound on its support. If we need a solution to our problem before some fixed deadline, but we do not perfectly trust our equipment and realize that a failure may leave us with somewhat less time, then we know $\kappa$ falls in some fixed range, but nothing more. With no restrictions on the prior $K$ beyond having bounded support, the maximum entropy captime distribution is the same uniform distribution we derived above (please see \cref{app:maxent} for all maximum entropy derivations): $u_{\text{unif}}(t) = 1 - \frac{t}{\kappa_0}$ if $t < \kappa_0$ and $u_{\text{unif}}(t) = 0$ otherwise.

Perhaps we do not know an upper bound on the timeout we will face, but instead know that we will have $\kappa_0$ hours on average (based on the average lifetime of our equipment, say). In this case the maximum entropy distribution for all priors with the condition that $\E_{\kappa \sim K}[\kappa] = \kappa_0$ is an exponential distribution with rate parameter $1/\kappa_0$, so we should optimize an exponential utility function: $u_{\text{exp}}(t) = e^{-t/\kappa_0}$.

\begin{figure*}[t]
    \centering
    \includegraphics[width=.93\textwidth]{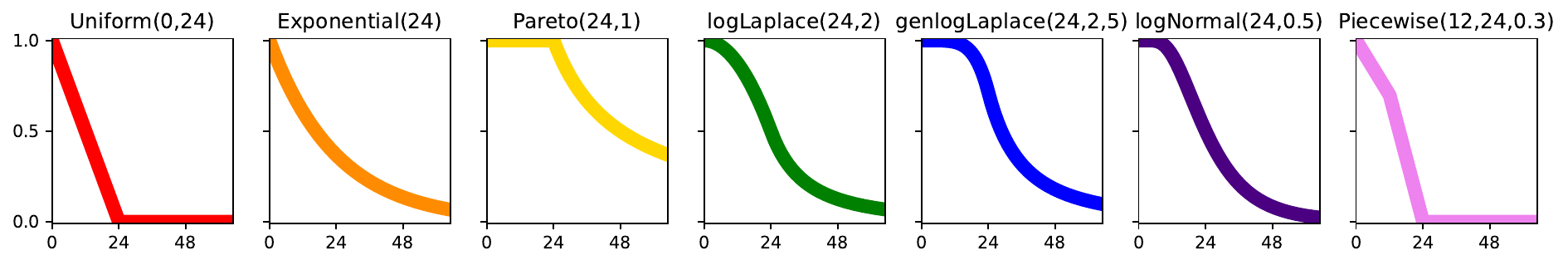}
    \vspace{-1em}
    \caption{\small{Utility functions from \cref{sec:maxent}, obtained from different maximum-entropy prior distributions.}}
    \label{fig:utilgrid}
\end{figure*}

Maybe we do not quite know the average time limit, but we do know its expected order of magnitude. This amounts to a constraint on the expectation of the log of the timeout: $\E_{\kappa \sim K}[\log{(\kappa / \kappa_0)}] = 1 / \alpha$ where $\kappa > \kappa_0$ (i.e., we know the order of magnitude measured in units of $\kappa_0$). The maximum entropy distribution under this constraint is a Pareto with shape parameter $\alpha$, so we should optimize a geometric utility function: $u_{\text{Pareto}}(t) = 1$ if $t < \kappa_0$ and $u_{\text{Pareto}}(t) = \big( \frac{\kappa_0}{t} \big)^{\alpha}$ otherwise.

We summarize how a decision maker might identify an appropriate utility function by reasoning about both their knowledge about the captime and any costs they incur for the passage of time in \cref{fig:summary_table}.

\begin{figure}    
    \renewcommand\theadfont{}
    \renewcommand\theadgape{}
    \renewcommand\cellgape{}
    \newcommand{\tblspc}{\addlinespace[.9em]}
    \centering
    \scriptsize
    \begin{tabular}{ c c c }
    \toprule
    \multicolumn{3}{c}{\textit{\textbf{\thead{``The best algorithm...}}}} \\
    & \hspace{1em} \makecell{constant utility} \hspace{1em} & \hspace{1em} \makecell{general utility} \hspace{1em} \\ 
    \midrule
    \makecell{known\\captime:\\$\kappa = \kappa_0$} & \makecell{\textit{\textbf{... solves the most instances.''}}\\(\cref{ex:known-constant})} & \makecell{\textit{\textbf{... gives the greatest expected}}\\ \textit{\textbf{proportion of benefit.''}}\\(\cref{ex:known-general})} \\ 
    \tblspc
    \makecell{known\\distribution:\\$\kappa \sim K$} & \makecell{\textit{\textbf{... is most likely to solve}}\\ \textit{\textbf{an instance from $K$.''}}\\(\cref{ex:unknown-constant})} & \makecell{\textit{\textbf{... gives the greatest expected}}\\ \textit{\textbf{proportion of benefit, in}}\\ \textit{\textbf{expectation over $K$.''}}\\(numerical methods)} \\ 
    \tblspc
    \makecell{unknown\\distribution:\\$\kappa \sim ?$} & \makecell{\textit{\textbf{... is most likely to solve an}}\\ \textit{\textbf{instance from the Maximum}}\\ \textit{\textbf{Entropy distribution.''}}\\(\cref{sec:maxent})} & \makecell{\textit{\textbf{... gives the greatest expected}}\\ \textit{\textbf{proportion of benefit, in}}\\ \textit{\textbf{expectation over the}}\\ \textit{\textbf{Maximum Entropy}}\\ \textit{\textbf{distribution.''}}\\(numerical methods)} \\ 
    \bottomrule
    \end{tabular}
    \vspace{-.15cm}
    \caption{Implied scoring functions for different scenarios.}\label{fig:summary_table}
\end{figure}

\begin{example}[Pareto timeout, constant utility]\label{ex:pareto-constant}
    Suppose that we face no decrease in utility from time spent running our algorithm, but that we do not know the distribution of $\kappa$ with certainty, only that we expect it to be on the order of seconds. If we are also measuring time in seconds, we can interpret this condition as the restriction $\E_{\kappa \sim K}[ \log(\kappa) ] = 1$, which implies a Pareto distribution for $K$ with parameters 1 and 1. Setting the constants $c_1=1$ and $c_0=0$ we get the score function $\Emath_{t \sim A, \kappa \sim K}[u(t, \kappa)] = F_A(1) + \Emath_{t \sim A} \big[ \frac{1}{t} \st t \ge 1 \big] \big( 1 - F_A(1) \big)$. Further, if it happens to be the case (as in \cref{ex:motivating}) that $A$ always takes at least 1 second, so that $F_A(1) = 0$, our scoring function becomes $A$'s expected inverse runtime $\Emath_{t \sim A} [ 1 / t]$. Applied to the two algorithms in \cref{ex:motivating}, this gives $\Emath_{t \sim A}[1 / t] \ge 0.99$, while $\Emath_{t \sim B}[1 / t] \le 1/864000$.
\end{example}

The geometric (Pareto) utility function offers an appealing, alternative interpretation. Consider how our utility would change if our runtime doubled from $t$ to $2t$. Fixing $\alpha = 1$ for simplicity, we see that $u(2t) = u(t) / 2$: doubling runtime halves our utility. Different values of $\alpha$ would give different rates of geometric progression. The parameter $\kappa_0$ serves to calibrate our runtimes, determining our units of measurement.\footnote{Mathematically, $\kappa_0$ serves to remove the units from inside the logarithm, which is a transcendental function.} Are we expecting a runtime of seconds, hours or days? If hours, the Pareto utility pays no consideration to runtimes smaller than one hour; these are indistinguishable to us and all represent perfect utility. But we do not need to be so indiscriminate of small runtimes. The Pareto utility was derived from an order-of-magnitude condition like $\E_{\kappa \sim K}[\log(\kappa / \kappa_0) \st \kappa \ge \kappa_0] = 1 / \alpha$. An equivalent left-tail condition would be $\E_{\kappa \sim K}[\log(\kappa_0 / \kappa) \st \kappa < \kappa_0] = 1 / \alpha$. If we insist on continuity at $\kappa_0$ (implying a smooth utility function), the maximum entropy distribution under these conditions is a log-Laplace\footnote{$X$ follows a log-Laplace distribution if $\log X$ follows a Laplace distribution, analogous to log-normal.} with parameters $\log \kappa_0$ and $\alpha$, giving a utility function with geometric decay above $\kappa_0$ as well as geometric growth (towards unity) below it: $u_{LL}(t) = 1 - \frac{1}{2} \big( \frac{t}{\kappa_0} \big)^{\alpha}$ if $t < \kappa_0$ and $u_{LL}(t) = \frac{1}{2}\big( \frac{\kappa_0}{t} \big)^{\alpha}$ otherwise. 

The log-Laplace distribution divides its probability mass equally between values greater and less than $\kappa_0$ (i.e., it assumes the probability that we get an extension is equal to the probability that we face a mishap). This might not be the case (e.g., if our client is a stickler and our servers are old). We can insist that $\Pr_{\kappa \sim K}(\kappa < \kappa_0) = p$, and impose similar but more flexible order-of-magnitude tail conditions: $\E_{\kappa_\sim K}[ \log(\kappa_0/\kappa) \st \kappa < \kappa_0] = 1/\beta$ and ${\E_{\kappa_\sim K}[ \log(\kappa/\kappa_0) \st \kappa \ge \kappa_0]} = 1/\alpha$. The maximum entropy distribution with these conditions gives us control over how much probability mass we place on either side of $\kappa_0$, and over the decay rates for timeouts that deviate from $\kappa_0$. However, if we want a distribution that is continuous at $\kappa_0$ (so that $u$ is smooth) we find that we require $p = \frac{\alpha}{\alpha + \beta}$. The resulting distribution is a \emph{generalized} log-Laplace: $u_{GLL}(t) = 1 - \frac{\alpha}{\alpha + \beta} \big( \frac{t}{\kappa_0} \big)^{\beta}$ if $t < \kappa_0$ and $u_{GLL}(t) = \frac{\beta}{\alpha + \beta}\big( \frac{\kappa_0}{t} \big)^{\alpha}$ otherwise. 

The Pareto and log-Laplace distributions constrain the mean absolute deviation (in log space), but we could choose to constrain the more familiar squared deviation instead. Fixing $\E[( \log (\kappa / \kappa_0) )^2] = \sigma^2$, with the centering condition $\E[ \log (\kappa / \kappa_0) ] = 0$, gives a log-normal maximum entropy distribution and an error-function utility: $u_{LN}(t) = \frac{1}{2} - \frac{1}{2} \erf\big( \frac{\log (t/\kappa_0)}{\sqrt{2} \sigma} \big)$.

\begin{figure*}[t]
    \centering
    \includegraphics[width=.73\textwidth]{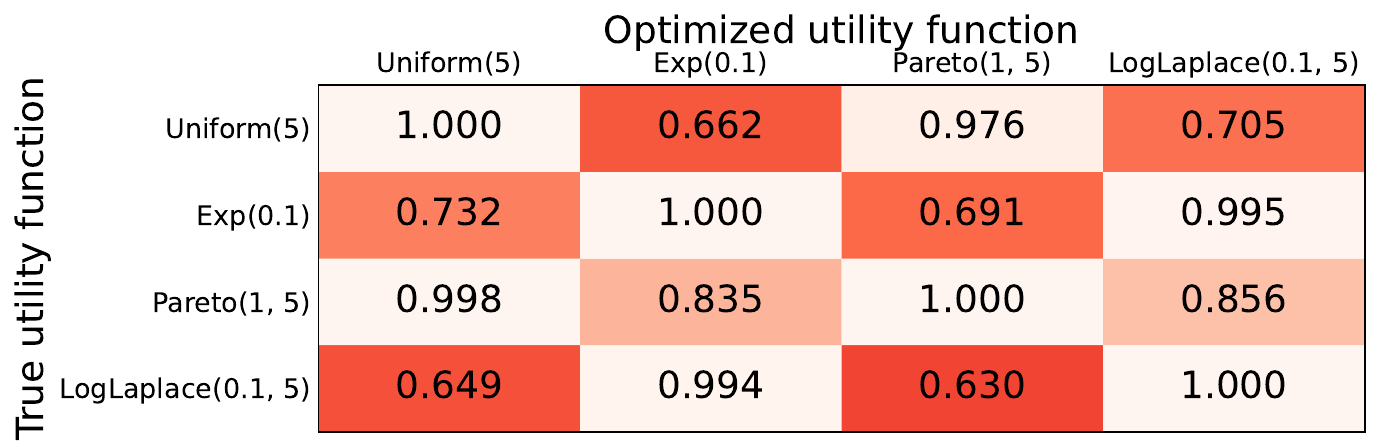}
    \vspace{-.15cm}
    \caption{\small{Failing to optimize the right function can yield significantly less utility. We optimized the \texttt{minisat} solver according to one utility function and then assessed its performance according to another. Rows indicate true utility functions; columns indicate the utility function used to perform the optimization; each function was normalized to have maximum 1. Colour intensity indicates degree of suboptimality. }}
    \vspace{-.15cm}
    \label{fig:cross_util}
\end{figure*}

Finally, we can incorporate tail constraints as maximum entropy conditions. Maybe we have $\kappa_0$ hours and we know our server might fail, but it is brand new and we know that the chance it will fail within the first $\kappa_0$ hours is very low, so that $\Pr_{\kappa \sim K}(\kappa < \kappa_1) \le \delta$. Then we should optimize the piecewise linear utility function: $u_{\text{piece}}(t) = 1 - \frac{\delta t}{\kappa_1}$ if $t < \kappa_1$, $u_{\text{piece}}(t) = \frac{(1 - \delta)(\kappa_0 - t)}{\kappa_0 - \kappa_1}$ if $\kappa_1 \le t < \kappa_0$, and $u_{\text{piece}}(t) = 0$ otherwise.

\cref{fig:utilgrid} illustrates the utility functions we have discussed in this section. Overall, the method of maximum entropy gives us a principled way to adopt partial knowledge about $K$ into our scoring function; of course, it applies similarly beyond the case of step-function utilities.

\section{Estimating Expected Utility from Samples}\label{sec:sampling}

So far we have talked about choosing between algorithms in the case where their runtime distributions are known. In practice, we must estimate runtime distributions via sampling and try to make high-probability claims about the distributions based on these estimates. In this setting, each sample we draw imposes a runtime cost equal to its (capped) value. Thus, if we want to be efficient in our estimation procedure, it is not the number of samples that concerns us, but the sum of their values. This raises the question of how cheaply we can estimate an algorithm's score.

Suppose $t_1, \ldots, t_m$ are runtimes sampled from $A$. We do not get to observe each $t_j$, but instead observe \emph{capped} runtimes $t_j(\kappa) = \min\{ t_j, \kappa \}$. We have a utility function $u(t) = u_{K}(t) = \E_{\kappa \sim K}[u(t, \kappa)]$ incorporating our knowledge of the captime distribution $K$, as in \cref{sec:maxent}. The utility values of the capped runtimes we observe are $u(t_j(\kappa))$, and we want to know if these are a good estimate of the true expected utility. The problem's saving grace is that we do not need to learn about the distribution of runtimes, we need to learn about the distribution of utilities, which is a monotonically decreasing function of runtime, bounded from below by 0. If we are estimating average runtime, long but exceedingly unlikely runs can always impact the expected value, and searching for them (to rule them out) becomes important. When estimating utilities on the other hand, long runs become less and less important since the utility they contribute approaches 0 (as does their likelihood of occurring) and so they contribute virtually nothing to the expectation. Thus, expected utility can be accurately determined from capped samples even when expected runtime cannot. For simplicity, we assume in this section that $u$ is invertible and bounded in $[0,1]$. \cref{thm:timebound} shows that the total time required to estimate an algorithm's score depends on $u^{-1}(\cdot)$, and on parameters $\epsilon$ and $\delta$, but has no dependence on $A$'s average runtime. That is, regardless of $A$'s runtime distribution---even if $A$'s mean runtime is infinite---we can accurately estimate its true expected utility from capped samples, with the number of samples required and the captime depending on $\epsilon$, $\delta$, and $u$.

\begin{theorem}\label{thm:timebound}
    The time to estimate the score of an arbitrary algorithm $A$ to within an $\epsilon$ additive factor with probability $1 - \delta$ will be greater than $m \cdot u^{-1}(2\epsilon)$ in the worst case, but always less than $m \cdot u^{-1}(\epsilon/2)$, where $m = \frac{\ln(2/\delta)}{2}\big(\frac{2-\epsilon}{\epsilon}\big)^2$.
\end{theorem}

See \cref{app:proofs} for a proof. Briefly, the captime is set large enough that runtimes beyond it are too small to matter. With this captime it is necessary and sufficient to do $m$ runs.

\section{The Effect of Different Utility Functions}\label{sec:results}

\begin{figure*}[t]
    \centering
    \includegraphics[width=.78\textwidth]{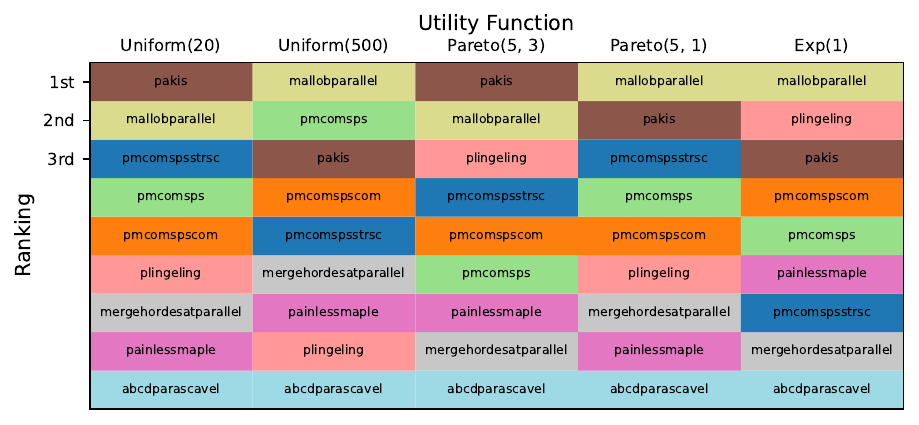}
    \vspace{-.15cm}
    \caption{\small{Results from the Parallel Track of the 2021 International SAT Competition under different utility functions.  Colours correspond to different solvers. Although various patterns emerge, each utility function yields a different top-three ranking.}}
    \vspace{-.15cm}
    \label{fig:satresults}
\end{figure*}

In \cref{sec:axioms} we showed that our preferences for algorithm runtime distributions will imply specific utility functions, which in turn imply specific algorithm scoring functions. But does this matter? Will our choices actually be affected? In this section we argue that  it does indeed matter. We demonstrate this in two settings: automated algorithm configuration and counterfactual analysis of the 2021 International SAT Competition\footnote{Code to reproduce all figures can be found at \url{https://github.com/drgrhm/formalizing-preferences}}.

\paragraph{Algorithm Configuration.} We considered a dataset due to \citet{WeGySz18} which evaluated 972 randomly-sampled configurations of the \texttt{minisat} \citep{sorensson2005minisat} SAT solver on 20118 instances generated by CNFuzzDD\footnote{\url{http://fmv.jku.at/cnfuzzdd/}} that each took at least a second to solve with the default configuration. In order to explore the extent to which it would matter if we optimized for one utility function but really cared about another, we identified the best configuration according to each of four utility functions (uniform, exponential, Pareto, log-Laplace) and then evaluated the quality of each configuration according to all of the utility functions. Our results (\cref{fig:cross_util}) show that these differences were significant in practice: we often lost a substantial fraction of the available utility when we optimized for the wrong utility function.

\paragraph{International SAT Competition.} \cref{fig:satresults} shows the ranking of the Parallel Track of the 2021 International SAT Competition.\footnote{The Parallel Track had fewer entrants than other tracks, yielding an easier-to-read figure. Other tracks exhibit qualitatively similar responses to changing the utility function used.} As discussed in \cref{sec:intro}, the organizers of these competitions have recognized that there are different reasonable choices for evaluation metrics (e.g., do we reward algorithms that are likely to finish, or those that are fast when they do finish), and that different choices will lead to different rankings of solvers. \cref{fig:satresults} ranks each solver according to its average score on the competition's set of test instances. No two rankings are exactly alike; each utility function gives a different top-three ranking. It is important to note that there is no objective ``best'' algorithm. Each of the entrants has its own merits, having been carefully developed by individuals who genuinely believed it to be a good solver. How good depends substantially on our metric for ``good'', as \cref{fig:satresults} demonstrates.

\section{Conclusion}\label{sec:conclusion}

It is nontrivial to identify a general scoring function that reflects our preferences over runtime distributions, particularly given that we often observe only capped runtimes. Such functions are needed whenever algorithms are compared based on their performance, particularly when these comparisons are done programatically, as in the case of automated algorithm design. Following \citet{vonneumann1944theory}, this paper has identified the constrained family of utility functions that results when our preferences obey six simple axioms. We have worked through a wide range of examples showing utility functions that would be appropriate in different scenarios, both when captime distributions are known and when they are underspecified; in the latter case, we appeal to the method of maximum entropy. Finally, we have shown that, given a specific utility function, we can estimate an algorithm's score to within a desired degree of accuracy using capped runtime samples, where the size of the captime depends on the desired accuracy and on the utility function.

\section*{Acknowledgements}

The first two authors were funded by an NSERC Discovery Grant, a DND/NSERC Discovery Grant Supplement, a CIFAR Canada AI Research Chair (Alberta Machine Intelligence Institute), a Compute Canada RAC Allocation, awards from Facebook Research and Amazon Research, and DARPA award FA8750-19-2-0222, CFDA \#12.910 (Air Force Research Laboratory). The research of the third author was supported in part by NSF awards CCF-2006737 and CNS-2212745. 

\bibliography{refs}
\bibliographystyle{icml2023}

\newpage
\appendix
\onecolumn

\section{Proofs of Theorems}\label{app:proofs} 

\paragraph{\cref{thm:main}:} If our preferences follow \cref{axiom:relevance,axiom:transitivity,axiom:continuity,axiom:monotonicity,axiom:independence,axiom:eagerness}, then a function $u$ satisfies 
    \begin{align}
        A \;\prefgteq{K}\; B  \iff \E_{t \sim A, \kappa \sim K}\big[u(t , \kappa)\big] \ge \E_{t \sim B, \kappa \sim K}\big[u(t , \kappa)\big],
    \end{align}
    for any runtime distributions $A$ and $B$ and any timeout distribution $K$ if and only if there are constants $c_0$ and $c_1 > 0$ such that $u(t, \kappa) = c_1 p(t, \kappa) + c_0$. Furthermore, $p$ has the form
    \begin{enumerate}
        \item $p(0, \kappa) = 1$ (maximum achieved at $t=0$),
        \item $p(t, \kappa) \ge p(t', \kappa)$ for all $t \le t'$ (monotonically decreasing),
        \item $p(t, \kappa) > 0$ for all $t < \kappa$ (strictly positive),
        \item $p(\kappa, \kappa) = 0$ (minimum achieved at $t=\kappa$).
    \end{enumerate}

\begin{proof}\label{pf:thm:main}
    Given an arbitrary runtime distribution $A$ and a timeout distribution $K$, we will construct a new synthetic algorithm $X$ that returns an answer either instantaneously or after some amount of time sampled from $K$. Formally, 
    \begin{align}\label{eqn:pf:thm:main:x1}
        X = \bigg[ \Big[ p(t, \kappa) :  \delta_0 \,,\, \big(1 - p(t, \kappa) \big) : \delta_{\infty} \Big] \;\big|\; t \sim A, \kappa \sim K \bigg],
    \end{align}
    where $p$ is defined in \cref{def:p}. Setting 
    \begin{align*}
    p_A = \int_0^{\infty} \int_0^{\infty} p(t, \kappa) dF_A(t) dF_{K}(\kappa) = \E_{t \sim A, \kappa \sim K}\big[ p(t, \kappa) \big],
    \end{align*}
    we can write $X$'s runtime  distribution as
    \begin{align}\label{eqn:pf:thm:main:x2}
        X \;=\; \big[ p_A : \delta_0 \,,\, (1 - p_A) : \delta_{\infty} \big].
    \end{align}
    
    Consider the function $M(t, \kappa) = [p(t, \kappa) :  \delta_0 \,,\, (1 - p(t, \kappa) ) : \delta_{\kappa}]$ that maps runtime--captime pairs to mixture distributions. Since $p$ was defined in \cref{def:p} so that $\delta_{t} \prefeq{\delta_{\kappa}} M(t, \kappa)$, we can conclude from Independence that 
    \begin{align}\label{eqn:pf:thm:main:akx}
        A \prefeq{K} \bigg[ \Big[ p(t, \kappa) :  \delta_0 \,,\, \big(1 - p(t, \kappa) \big) : \delta_{\infty} \Big] \;\big|\; t \sim A, \kappa \sim K \bigg].
    \end{align}
    \cref{eqn:pf:thm:main:x1,eqn:pf:thm:main:x2,eqn:pf:thm:main:akx} together then give that 
        \begin{align}
            A \prefeq{K} \big[ p_A : \delta_0 \,,\, (1 - p_A) : \delta_{\infty} \big].
        \end{align}
    Now consider a second algorithm $B$, and define $Y$ and $p_B$ analogously to $X$ and $p_A$, but with $B$ in place of $A$, so that by the same argument we have
        \begin{align}
            B \prefeq{K} \big[ p_B : \delta_0 \,,\, (1 - p_B) : \delta_{\infty} \big].
        \end{align}
    Since $\delta_0 \prefgteq{K} \delta_{\infty}$ by Eagerness, Monotonicity tells us that $\big[ p_A : \delta_0 \,,\, (1 - p_A) : \delta_{\infty} \big] \prefgteq{K} \big[ p_B : \delta_0 \,,\, (1 - p_B) : \delta_{\infty} \big]$ iff $p_A \ge p_B$, and thus
    \begin{align}\label{eqn:pf:thm:main:putil}
        A \prefgteq{K} B \iff \E_{t \sim A, \kappa \sim K} \big[ p(t, \kappa) \big] \ge \E_{t \sim B, \kappa \sim K} \big[ p(t, \kappa) \big] .
    \end{align}
    So the function $p$ can serve as a utility function, and we can use the biconditional \cref{eqn:pf:thm:main:putil} to infer certain aspects of $p$'s form: 
    
    \begin{enumerate}
        \item By definition of $p$ we have $\delta_0 \prefeq{\delta_{\kappa}} \big[ p(0, \kappa) : \delta_0 \,,\, (1 - p(0, \kappa)) : \delta_{\infty} \big]$, where $p(0, \kappa) \le 1$, and by Eagerness we have $\delta_0 \prefgteq{\delta_{\kappa}} \delta_{\infty}$ so applying Monotonicity with $A=\delta_0, B=\delta_{\infty}$ and $q = 1$, we have that $p(0, \kappa) \ge 1$, and thus $p(0, \kappa) = 1$. 
        \item For any $t \le t' < \kappa$, Eagerness tells us that $\delta_t \prefgteq{\delta_{\kappa}} \delta_{t'}$, and so $p(t, \kappa) \ge p(t', \kappa)$.
        \item Relevance states that $\delta_t \prefgt{\delta_{\kappa}} \delta_{\kappa}$, and so $p(t, \kappa) > p(\kappa, \kappa) = 0$.
        \item By definition, $p(\kappa, \kappa)$ is set to $0$. 
    \end{enumerate}
    
    Together this all means that the function $p$ is monotonically decreasing from $1$ to $0$, reaching $0$ only at $t = \kappa$ (i.e., it is strictly greater than $0$ for all $t < \kappa$). So $p$ has the given form and can serve as a utility function. 
    
    We can now show that a function $u$ satisfies \cref{eqn:thm:main:uutil} if and only if it has the form $u(t, \kappa) = c_1 p(t, \kappa) + c_0$ for some $c_1> 0$ and $c_0$. The reverse, `only if' direction follows immediately from linearity of expectation. For the forward, `if' direction, suppose that $u$ does satisfy \cref{eqn:thm:main:uutil} for all $A, B$ and $K$. Since $\delta_0 \prefgteq{K} A \prefgteq{K} \delta_\infty$ by Eagerness, Continuity says that there exists a constant $\alpha$ such that $A \prefeq{K} \big[ \alpha : \delta_0 , (1 - \alpha) : \delta_{\infty} \big]$. Using this equivalence, \cref{eqn:pf:thm:main:putil} tells us that
    \begin{align*}
        \E_{t \sim A, \kappa \sim K} \big[ p(t, \kappa) \big] = \alpha,
    \end{align*}
    and applying \cref{eqn:thm:main:uutil} to $A$ and the equivalent mixture $\big[ \alpha : \delta_0 , (1 - \alpha) : \delta_{\infty} \big]$ tells us that 
    \begin{align*}
        \E_{t \sim A, \kappa \sim K} \big[ u(t, \kappa) \big] &= \alpha \E_{\kappa \sim K} \big[ u(0, \kappa) \big] + (1 - \alpha) \E_{\kappa \sim K} \big[ u(\infty, \kappa) \big] \\
        &= \alpha \big( \E_{\kappa \sim K} \big[ u(0, \kappa) - u(\infty, \kappa) \big] \big) + \E_{\kappa \sim K} \big[ u(\infty, \kappa) \big],
    \end{align*}
    so setting $c_1= \E_{\kappa \sim K} \big[ u(0, \kappa) - u(\infty, \kappa) \big]$ and $c_0 = \E_{\kappa \sim K} \big[ u(\infty, \kappa) \big]$ we have that
    \begin{align*}
        \E_{t \sim A, \kappa \sim K} \big[ u(t, \kappa) \big] = c_1 \E_{t \sim A, \kappa \sim K} \big[ p(t, \kappa) \big] + c_0
    \end{align*}
    for any $A$ and $K$. In particular, when $A = \delta_t$ and $K = \delta_{\kappa}$ for arbitrary $t$ and $\kappa$ we get that $u(t, \kappa) = c_1 p(t, \kappa) + c_0$, which completes the proof.
\end{proof}

\paragraph{\cref{thm:timebound}:} The time to estimate the score of an arbitrary algorithm $A$ to within an $\epsilon$ additive factor with probability $1 - \delta$ will be greater than $m \cdot u^{-1}(2\epsilon)$ in the worst case, but always less than $m \cdot u^{-1}(\epsilon/2)$, where $m = \frac{\ln(2/\delta)}{2}\big(\frac{2-\epsilon}{\epsilon}\big)^2$.

    The proof follows from the next two lemmas. The number of samples required is determined by Hoeffding's inequality, independent of the choice of captime. If we are constrained by an accuracy parameter $\epsilon$, then we must do runs at an implied captime of $u^{-1}(\epsilon / 2)$. If we are constrained by a captime $\kappa$, then we must do enough runs to apply Hoeffding's inequality with an accuracy parameter of $u(\kappa)$. 
    
    \begin{lemma}\label{lem:mbound}
        For any $\epsilon, \delta$, any algorithm $A$, and any utility function $u$, if we take $m = \frac{\ln(2/\delta)}{2}\big(\frac{2-\epsilon}{\epsilon}\big)^2$ runtime samples from $A$ at captime $\kappa = u^{-1}(\epsilon/2)$, then the capped sample mean utility will be within $\epsilon$ of the true, uncapped mean utility with probability at least $1 - \delta$.
    \end{lemma}
    \begin{proof}
        By the triangle inequality 
        \begin{align*}
            &\Pr\bigg( \bigg| \frac{1}{m}\sum_{j=1}^m u(t_j(\kappa)) - \Emath_{t \sim A}\big[ u(t) \big] \bigg| \ge \epsilon \bigg) \\
            &= \Pr\bigg( \bigg| \frac{1}{m}\sum_{j=1}^m u(t_j(\kappa)) - \Emath_{t \sim A}\big[ u(t(\kappa)) \big] + \Emath_{t \sim A}\big[ u(t(\kappa)) - u(t) \big] \bigg| \ge \epsilon \bigg) \\
            & \le \Pr\bigg( \bigg| \frac{1}{m}\sum_{j=1}^m u(t_j(\kappa)) - \Emath_{t \sim A}\big[ u(t(\kappa)) \big] \bigg| + \bigg| \Emath_{t \sim A}\big[ u(t(\kappa)) - u(t) \big] \bigg| \ge \epsilon \bigg).
        \end{align*}
        But since $\E_{t \sim A}\big[ u(t(\kappa)) - u(t) \big] = \E_{t \sim A}\big[ u(t(\kappa)) - u(t) \big| t \ge \kappa \big] \Pr_{t \sim A}\big( t \ge \kappa \big)$, and $u(t(\kappa)) = u(\kappa) = \epsilon / 2$ for $t \ge \kappa$, we have that 
            \begin{align*}
            \Emath_{t \sim A}\big[ u(t(\kappa)) - u(t) \big] &\le \frac{\epsilon}{2}.
        \end{align*}
        Together these tell us that 
        \begin{align*}
            \Pr\bigg( \bigg| \frac{1}{m}\sum_{j=1}^m u(t_j(\kappa)) - \Emath_{t \sim A}\big[ u(t) \big] \bigg| \ge \epsilon \bigg) &\le \Pr\bigg( \bigg| \frac{1}{m}\sum_{j=1}^m u(t_j(\kappa)) - \Emath_{t \sim A}\big[ u(t(\kappa)) \big] \bigg| \ge \frac{\epsilon}{2} \bigg). 
        \end{align*}
        Then using the fact that $u(t_j(\kappa)) \in [\frac{\epsilon}{2}, 1]$ for all $j$, Hoeffding's inequality tells us that 
        \begin{align*}
            \Pr\bigg( \bigg| \frac{1}{m}\sum_{j=1}^m u(t_j(\kappa)) - \Emath_{t \sim A}\big[ u(t) \big] \bigg| \ge \epsilon \bigg) \le \delta
        \end{align*}
        if we take at least $m = \frac{\ln(2/\delta)}{2}\big(\frac{2-\epsilon}{\epsilon}\big)^2$ capped samples. 
    \end{proof}
    
    \cref{lem:mbound} shows that if we take enough samples at a large enough captime, then we can accurately estimate any distribution's mean utility. We know that if we take too few samples we will never be able to estimate a distribution well (even if they are uncapped samples). The next lemma shows it is also true that no matter how many samples we take, if the captime we take them at is too small, then we will fail to estimate some distributions well. 
    
    \begin{lemma}\label{lem:kbound}
        For any $\epsilon$ and any utility function $u$, there exists a distribution $A$ such that no matter how many samples we take, if the captime $\kappa < u^{-1}(2\epsilon)$, then the capped sample mean utility will be at least $\epsilon$ from the true, uncapped mean.
    \end{lemma}
    \begin{proof}
        By the (reverse) triangle inequality, and since utilities are positive, we have 
        \begin{align*}
            \bigg| \frac{1}{m}\sum_{j=1}^m u(t_j(\kappa)) - \Emath_{t \sim A}\big[ u(t) \big] \bigg| &\ge \frac{1}{m}\sum_{j=1}^m u(t_j(\kappa)) - \Emath_{t \sim A}\big[ u(t) \big].
        \end{align*}
        Since $t_j(\kappa) \le \kappa < u^{-1}(2\epsilon)$ for all $j$, we have $u(t_j(\kappa)) \ge u(\kappa) > 2\epsilon$, and so
        \begin{align*}
            \frac{1}{m}\sum_{j=1}^m u(t_j(\kappa)) > 2\epsilon .
        \end{align*}
        Since $u(t) \to 0$ as $t \to \infty$, there exists some $t_{\epsilon}$ such that $u(t_{\epsilon}) < \epsilon$. If we choose $A$ to always return $t_{\epsilon}$, we have that 
        \begin{align*}
            \Emath_{t \sim A}\big[ u(t) \big] < \epsilon.
        \end{align*}
        Combining the above we have that 
        \begin{align*}
            \bigg| \frac{1}{m}\sum_{j=1}^m u(t_j(\kappa)) - \Emath_{t \sim A}\big[ u(t) \big] \bigg| &\ge \epsilon
        \end{align*}
        regardless of how many samples are taken. 
    \end{proof}

\section{Independence, Decomposability, and the Compounding of Preferences}\label{app:independence}

The Independence axiom plays an important role in the proof of our main theorem, and in expected-utility theory in general. It is this axiom that makes our scoring function an expectation. 

The Independence axiom was not one of the axioms stated by \citet{vonneumann1944theory}, but emerged from a frenzy of activity that followed the publication of their result. \citet{malinvaud1952note} revealed that in fact their framework \emph{did} contain an Independence assumption, it was just hidden in their formal setup. We discuss this further below, with specific reference to our own setting of runtime distributions. Whatever its origins in decision theory, the mathematics of the Independence axiom were explored at least by \citet{kolmogorov1930notion}, in whose condition (iv) can be found the essence of the axiom. For detailed histories of Independence, its development, and incorporation into expected utility theory see, e.g., \citet{fishburn1989retrospective,fishburn1995invention,moscati2016retrospectives}

Different sources use different notation and slightly different forms of the Independence axiom, which can obscure the fact that they are all really saying the same thing, which is also what our own Independence axiom is saying: when we choose between two gambles, we can focus on the places those two gambles differ, because this is the part that will determine our preferences. We present a handful of different versions here, just to give an idea of how these are similar to our own. 

\textbf{Axiom II (Strong Independence)} \citep{samuelson1952probability}: \textit{If lottery ticket $(A)_1$ is (as good or) better than $(B)_1$, and lottery ticket $(A)_2$ is (as good or) better than $(B)_2$, then an even chance of getting $(A)_1$ or $(A)_2$ is (as good or) better than an even chance of getting $(B)_1$ or $(B)_2$.}

\textbf{Condition 2: Independence} \citep{fishburn1970utility}: $(P \prefgt{} Q, 0 < \alpha < 1) \Longrightarrow \alpha P + (1- \alpha) R \prefgt{} \alpha Q + (1- \alpha) R$.

\textbf{P2: sure-thing principle} \citep{savage1972foundations}: If $\mathbf{f}$, $\mathbf{g}$, and $\mathbf{f'}$,$\mathbf{g'}$ are such that:

\hspace{1cm} 1. in $\sim B$, $\mathbf{f}$ agrees with $\mathbf{g}$, and $\mathbf{f'}$ agrees with $\mathbf{g'}$,

\hspace{1cm} 2. in $B$, $\mathbf{f}$ agrees with $\mathbf{f'}$, and $\mathbf{g}$ agrees with $\mathbf{g'},$

\hspace{1cm} 3. $\mathbf{f} \le \mathbf{g}$;

then $\mathbf{f'} \le \mathbf{g'}$.

\textbf{Axiom 3.1.3 (Substitutability)} \citep{shoham2008multiagent}: \textit{If $o_1 \sim o_2$, then for all sequences of one or more outcomes $o_3,...,o_k$ and sets of probabilities $p, p_3,..., p_k$ for which $p + \sum_{i=3}^k p_i = 1$, $[p:o_1, p_3:o_3,...,p_k:o_k] \sim [p:o_2, p_3:o_3,...,p_k:o_k]$.} 

\textbf{NM2 \textit{Independence}} \citep{parmigiani2009decision}: for every $a$, $a'$, and $a''$ in $\mathcal{A}$ and $\alpha \in (0,1]$, we have 
\begin{align*}
    a \prefgt{} a' \quad \text{implies} \quad (1-\alpha)a'' + \alpha a \prefgt{} (1-\alpha)a'' + \alpha a'. 
\end{align*} 

\textbf{Axiom 3.13. Independence} \citep{bacci2019introduction}: \textit{Given $c_i, c_j, c_h \in C$ such that $c_i \sim c_j$, then $\langle c_i p c_h \rangle \sim \langle c_j p c_h \rangle$}.

We can see that all of these statements are really saying the same thing. Our preferences between gambles are determined by our preferences for their component parts; or put another way, our preferences for gambles are independent of the parts of those gambles that are the same. We can understand our own Independence axiom of \cref{sec:axioms} in the context of these others. We can trivially write any distribution $A$ as $[\delta_t \st t \sim A, \kappa \sim K]$ and if we note that $A \prefgteq{K} B$ is really just shorthand for $A \times K \prefgteq{} B \times K$, our Independence axiom can be stated as 

``if 
\begin{align}\label{eqn:component}
    \delta_t \times \delta_{\kappa} \;\prefeq{}\; M(t, \kappa) \times \delta_{\kappa}  
\end{align}
for all $t, \kappa$, then 
\begin{align}\label{eqn:compound}
    [\delta_t \st t \sim A, \kappa \sim K] \times K \;\prefeq{}\; [M(t, \kappa) \st t \sim A, \kappa \sim K] \times K \text{''.}
\end{align}

This says that our preferences for compound distributions (\cref{eqn:compound}) are determined by our preferences for their individual components (\cref{eqn:component}). Taking \citep{samuelson1952probability} as an example, we can write the ``Strong Indedpendence'' axiom in our notation as:

``if 
\begin{align*}
    A_1 \prefgteq{} B_1 \quad \text{and} \quad A_2 \prefgteq{} B_2
\end{align*}
then
\begin{align*}
    [p: A_1, (1-p): A_2] \prefgteq{} [p: B_1, (1-p): B_2] \text{''.}
\end{align*}
We can stress the similarity between the above and our version, by noting the irrelevant differences: We use indifference while Samuelson uses weak preference. Our compound distributions are mixtures over all $t,\kappa$, while Samuelson's are mixtures over just the indices $1,2$. Samuelson's preferences are over arbitrary gambles $A_1, B_1, A_2, B_2$, while our indifferences are stated only for sure outcomes $\delta_t$ with mixtures $M(t, \kappa)$ between the best and worst outcome, under sure captimes $\delta_{\kappa}$. Finally, and perhaps most significantly, the use of product distributions in our Independence axiom means that captimes compound in a somewhat non-obvious way, which essentially amounts to an assumption that any two draws from the captime distribution are the same to us, given the runtime $t$. 

Independence does something important that no other VNM axiom does: it makes a statement about the way our preferences ``carry over'' when distributions are nested within each other. If we prefer one distribution to another, how would we feel about the result of each of these nested within some third distribution? Monotonicity and Continuity make statements about nested distributions, but they do not say anything about how our preferences for distinct components extend to our preferences for the distributions they are part of. That is the role that Independence plays. It says that our preferences for complex, nested outcomes are determined by our preferences for their simple components.

\subsection{Violations of the Independence Axiom}

Violations of the Independence Axiom are well-known \citep{allais1953comportement,ellsberg1961risk,slovic1974accepts,maccrimmon1979utility,kahneman1979prospect}. When presented with simple choices between gambles, seemingly rational individuals make apparently reasonable selections that are not consistent with the Independence axiom. Some decision theorists have been quick to dismiss these choices as ``wrong'', stressing the rationality of the Independence axiom and thus the irrationality of violating it. Others have been happy enough to just discard Independence.

The validity of the Independence axiom is not the subject of this paper. We claim only that \emph{if} a decision-maker's choices follow the axioms, \emph{then}\footnote{In fact, the implication is bidirectional, but we are less interested in the other direction.} they will act as though they are choosing algorithms according to a utility function of the given form. We note, however, that our setting is somewhat unique among decision problems in that \emph{we do not have to incur the full loss of a bad outcome}. Since many of the observed violations of Independence can be understood as individuals trying to avoid worst-case outcomes (i.e., they do not want to end up with nothing when they had a decent chance of getting something), there is good reason to suppose that it does hold in our particular setting. Implicit in the assumption that we face a captime is the assumption that there is some default action available to us, and because we can always choose to terminate the algorithm ourselves, we can take this option at any time. This is a unique characteristic of our setting. 

Consider an algorithm that either finishes in 1 second or in 100 years. If we use this algorithm, we will likely decide that it is not worth waiting 100 years for the solution in those cases where it does not finish in 1 second, and in practice never run for much more than, say, 2 seconds. Now consider an investment with analogous monetary payouts. If say, we either gain \$1 million or lose \$10 million we cannot simply decide in the bad case that we have lost too much money and take some other, smaller loss. We cannot simply ``stop'' the investment the way we can stop an algorithm run (at least not if we want to keep investing). We can only hedge against the bad case \emph{before} we observe the outcome by buying an option on the same asset, for instance. In the algorithm runtime case, we can decide to limit our losses \emph{while we observe the outcome}, as if we had actually purchased some sort of ``hedge'' option on our ``runtime investment.'' In effect, we never have to accept large losses. 

Finally, we note that whatever psychological effects a decision-maker may experience when choosing between complex, nested gambles in general, in the case of algorithm runtime distributions it is especially reasonable to suppose that differences in the structure of randomness do not matter to us, since they are essentially hidden from us in practice. When we run our algorithm, we generally will not be, and certainly need not be aware of its inner workings in any detailed way. We are not explicitly exposed to the structure of the randomness, only the final outcome. An algorithm that either runs some subroutine $A$ or some subroutine $B$ with a 50/50 chance depending on its random seed does not appear to us as a coin toss followed by a draw from one of $A$ or $B$. It simply returns an answer to us after some elapsed time. Whatever the case may be in other settings, we really are only concerned with the distribution of final outcomes, because that is really all we observe.

\subsection{Decomposability}

\citet{vonneumann1944theory} established a kind of preference-nesting that is related to, but different from, Independence with an axiom they called the ``algebra of combining.'' Stated in our notation this would be
\begin{small}
\begin{align}\label{eqn:vnmalgebra}
    \Big[p: \big[ q: A, (1-q):B \big], (1-p):B \Big] = \Big[pq: A, (1-pq):B \Big],
\end{align}
\end{small}
which simply says that the probabilities of nested distributions over outcomes obey the normal rules of probability (i.e., they \emph{compound} in the normal way). Note, however, the equality in \cref{eqn:vnmalgebra}. To von Neumann and Morgenstern this was a \emph{true equality}, not an indifference. In fact, they did not axiomatize an indifference relation at all. In effect, their outcome space was a set of indifference classes, the elements of each class being represented by what they called an ``abstract utility.'' In this way, they do not talk about specific outcomes, or even about distributions over specific outcomes, they simply define the operation $[p:A, (1-p):B]$, where $A$ and $B$ are ``abstract utilities'', and state that the outcome space is closed under this operation. In this way, their mixture operation is not a probability distribution \emph{per se}, and an axiom like \cref{eqn:vnmalgebra} is necessary to establish the rules for manipulating such expressions (literally, to establish their algebra). What's more, since $A$ and $B$ are equivalence classes, \cref{eqn:vnmalgebra} has a more subtle interpretation than it might first appear. In particular, when we note that $A$ and $B$ are actually sets of (distributions over) outcomes, we can see why an axiom describing this combination process becomes necessary. 

In our setting of runtime distributions, the equality in \cref{eqn:vnmalgebra} follows immediately from the mathematical fact that distributions can be combined to form new distributions using the normal compounding operation. Hence, no such axiom is needed. Some later authors have axiomatized this property, calling it \emph{Decomposability}, and replacing the equality with an indifference. This is not strictly necessary and others have simply taken for granted, correctly, that this operation can be performed. A distribution over distributions is a distribution and no preference can change this fact. Thus, if $A$ and $B$ are (distributions over) fixed outcomes, and if we interpret $[p:A, (1-p):B]$ not as an abstract operation like von Neumann and Morgenstern did, but as a lottery that gives (a draw from) $A$ with probability $p$ and (a draw from) $B$ with probability $1-p$, then the equality in \cref{eqn:vnmalgebra} is a true equality, regardless of what the outcome space is. This makes no assertion about how such nesting affects our preferences, only that such nesting is possible, and that the result is a valid object for which preferences can be defined. The assertion of how such nesting affects our preferences is made by the Independence axiom. 

Independence was shown by \citet{malinvaud1952note} to be implicit in Von Neumann and Morgenstern's use of indifference classes. We can understand this in the language of our algorithm runtime setting as follows. 

The ``abstract utilities'' $U$ and $V$ are equivalence classes of algorithms (i.e., indifference sets of algorithms). We are ``indifferent'' between two algorithms if and only if they belong to the same equivalence class. As noted above, von Neumann and Morgenstern define an abstract operation $[p: U, (1-p): V] = W$ on equivalence classes. The resulting equivalence class $W$ is understood to be the set of all algorithm runtime distributions of the form $[p:A, (1-p):B]$ where $A \in U$ and $B \in V$. Thus, if $A_1, B_1 \in U$ and $A_2, B_2 \in V$, then $[p: A_1 , (1-p): A_2] \in W$ and $[p: B_1 , (1-p): B_2] \in W$. Or saying the same thing in the language of indifference we have: if $A_1 \prefeq{} B_1$ and $A_2 \prefeq{} B_2$, then $[p: A_1 , (1-p): A_2] \prefeq{} [p: B_1 , (1-p): B_2]$, which is a form of the Independence axiom.


\section{Maximum Entropy Distribution Derivations}\label{app:maxent}

Entropy represents the average number of bits required to specify the value of an outcome. When that value is a continuous quantity, as in the case of time, entropy is technically infinite. However, we are actually never interested in entropy in an absolute sense, only relative differences in entropy matter. The continuous analogue of entropy, and the quantity we maximize, is referred to as \emph{differential} entropy (of the distribution $f$), and is given by 
\begin{align*}
    h\big[f\big] = - \int_0^{\infty} f(\kappa) \log f(\kappa)  \;d\kappa .
\end{align*}
Informally, the entropy of a continuous distribution $f$ approaches $h[f] + \infty$ as it is represented with increasing precision, so differences in differential entropy are the same as differences in Shannon entropy, since the $\infty$ ``cancels'' by subtraction. This justifies the use of differential entropy as a stand-in for Shannon entropy. In real implementations, the entropy of an $n$-bit quantization of a continuous quantity is approximated by $h[f] + n$. See e.g., \cite{cover2006elements} for details.

Derivation of the maximum entropy distributions is done using the calculus of variations and the method of Lagrange multipliers.

\subsection{Bounded (uniform).} 
Timeout is bounded in $[0, \kappa_0]$.

The functional is 
\begin{align*}
    L[f(\kappa), \lambda] = - \int_0^{\kappa_0} f(\kappa) \log (f(\kappa)) \;d \kappa - \lambda_0 \bigg( \int_0^{\kappa_0} f(\kappa) \;d\kappa - 1 \bigg) .
\end{align*}
The partial derivatives are 
\begin{align*}
    \frac{\partial L}{\partial f(\kappa)} &= - \log (f(\kappa)) - 1 - \lambda_0 \\
    \frac{\partial L}{\partial \lambda_0} &= - \int_0^{\kappa_0} f(\kappa) \;d\kappa + 1 
\end{align*}
giving 
\begin{align*}
    f(\kappa) = \exp{(- \lambda_0 - 1)}
\end{align*}
with condition
\begin{align*}
    \int_0^{\kappa_0} f(\kappa) \;d\kappa = 1 .
\end{align*}
Solving for $\lambda_0$
gives 
\begin{align*}
    \lambda_0 = \log(\kappa_0) - 1
\end{align*}
and 
\begin{align*}
    f(\kappa) = 1 / \kappa_0
\end{align*}
and 
\begin{align*}
    F(t) = t / \kappa_0.
\end{align*}

\subsection{Fixed Expectation (exponential).} 
Timeout has expectation $\mu$.

The functional is 
\begin{align*}
    L[f(\kappa), \lambda] = - \int_0^{\infty} f(\kappa) \log (f(\kappa)) \;d \kappa &-\lambda_0 \bigg( \int_0^{\infty} f(\kappa) \;d\kappa - 1 \bigg) -\lambda_1 \bigg( \int_0^{\infty} \kappa f(\kappa) \;d\kappa - \mu \bigg) .
\end{align*}
The partial derivatives are 
\begin{align*}
    \frac{\partial L}{\partial f(\kappa)} &= - \log (f(\kappa)) - 1 - \lambda_0 - \lambda_1 \kappa \\
    \frac{\partial L}{\partial \lambda_0} &= - \int_0^{\infty} f(\kappa) \;d\kappa + 1 \\
    \frac{\partial L}{\partial \lambda_1} &= - \int_0^{\infty} \kappa f(\kappa) \;d\kappa + \mu
\end{align*}
giving 
\begin{align*}
    f(\kappa) = \exp{(- \lambda_1 \kappa - \lambda_0 - 1)}
\end{align*}
with conditions, 
\begin{align*}
    \int_0^{\infty} f(\kappa) \;d\kappa &= 1 \\
    \int_0^{\infty} \kappa f(\kappa) \;d\kappa &= \mu.
\end{align*}
Solving for $\lambda_0$ and $\lambda_1$
gives 
\begin{align*}
    \lambda_0 &= \log( 1 / \lambda_1 ) - 1 \\
    \lambda_1 &= 1 / \mu\\
\end{align*}
and 
\begin{align*}
    f(\kappa) = \frac{1}{\mu} \exp{\bigg(-\frac{\kappa}{\mu}\bigg)}
\end{align*}

\subsection{Fixed order of magnitude (Pareto).} 
Timeout has $\E_{\kappa_\sim K}[\log(\kappa/\kappa_0)] = 1/\alpha$ with $\kappa > \kappa_0$.

The functional is 
\begin{align*}
    L[f(\kappa), \lambda] = - \int_{\kappa_0}^{\infty} f(\kappa) \log (f(\kappa)) \;d \kappa &-\lambda_0 \bigg( \int_{\kappa_0}^{\infty} f(\kappa) \;d\kappa - 1 \bigg) - \lambda_1 \bigg( \int_{\kappa_0}^{\infty} \log(\kappa / \kappa_0) f(\kappa) \;d\kappa - \frac{1}{\alpha} \bigg).
\end{align*}
The partial derivatives are 
\begin{align*}
    \frac{\partial L}{\partial f(\kappa)} &= - \log (f(\kappa)) - 1 - \lambda_0 - \lambda_1 \log(\kappa / \kappa_0) \\
    \frac{\partial L}{\partial \lambda_0} &= - \int_{\kappa_0}^{\infty} f(\kappa) \;d\kappa + 1 \\
    \frac{\partial L}{\partial \lambda_1} &= - \int_{\kappa_0}^{\infty} \log(\kappa / \kappa_0) f(\kappa) \;d\kappa + \frac{1}{\alpha}
\end{align*}
giving 
\begin{align*}
    f(\kappa) = \exp{(- \lambda_0 - 1)} \bigg( \frac{\kappa}{\kappa_0} \bigg)^{-\lambda_1}
\end{align*}
with conditions, 
\begin{align}
    \int_{\kappa_0}^{\infty} f(\kappa) \;d\kappa &= 1 \\
    \int_{\kappa_0}^{\infty} \log(\kappa / \kappa_0) f(\kappa) \;d\kappa &= \frac{1}{\alpha}.
\end{align}
Integrating (1) gives
\begin{align*}
    \exp(-\lambda_0 - 1) &= \frac{\lambda_1 - 1}{\kappa_0} \\
\end{align*}
so 
\begin{align*}
    f(\kappa) = \frac{(\lambda_1 - 1)\kappa_0^{\lambda_1 - 1}}{\kappa^{\lambda_1}},
\end{align*}
which is a Pareto pdf with parameter $\lambda_1 - 1$. Integrating\textbf{} gives that $\lambda_1 - 1 = \alpha$.

\subsection{Two-tailed fixed order of magnitude, (un)equal tails ((generalized) log-Laplace).} 
The conditions are $\E_{\kappa_\sim K}[ \log(\kappa_0/\kappa) \st \kappa < \kappa_0] = 1/\beta$ and ${\E_{\kappa_\sim K}[ \log(\kappa/\kappa_0) \st \kappa \ge \kappa_0]} = 1/\alpha$, with $\Pr_{\kappa \sim K}(\kappa < \kappa_0) = p$.
The functional is 
\begin{align*}
    L[f(\kappa), \lambda] = - \int_{0}^{\infty} f(\kappa) \log (f(\kappa)) \;d \kappa &- \lambda_0 \bigg( \int_{0}^{\infty} f(\kappa) \;d\kappa - 1 \bigg) \\
    &-\lambda_1 \bigg( \int_0^{\kappa_0} f(\kappa) - p \bigg) \\
    &- \lambda_2 \bigg( \int_{0}^{\kappa_0} \log(\kappa_0 / \kappa) f(\kappa) \;d\kappa  - \frac{p}{\beta} \bigg) \\
    &- \lambda_3 \bigg( \int_{\kappa_0}^{\infty} \log(\kappa / \kappa_0) f(\kappa) \;d\kappa - \frac{1-p}{\alpha} \bigg).
\end{align*}
The main partial derivative is
\begin{align*}
    \frac{\partial L}{\partial f(\kappa)} = - \log (f(\kappa)) - 1 - \lambda_0 &- \lambda_1 \ind(\kappa < \kappa_0) - \lambda_2 \log(\kappa_0 / \kappa) \ind(\kappa < \kappa_0) - \lambda_3 \log(\kappa / \kappa_0)\ind(\kappa \ge \kappa_0),
\end{align*}
giving the pdf: 
\begin{align}
    f(\kappa) = \exp\Big(-1 -\lambda_0 - \lambda_1 \ind(\kappa < \kappa_0)\Big) \bigg(\frac{\kappa}{\kappa_0}\bigg)^{\lambda_2 \ind(\kappa < \kappa_0) - \lambda_3 \ind(\kappa \ge \kappa_0)}
\end{align}
with the conditions 
\begin{align*}
    \int_{0}^{\infty} f(\kappa) \;d\kappa &= 1 \\
    \int_0^{\kappa_0} f(\kappa) &= p \\
    \int_{0}^{\kappa_0} \log(\kappa_0 / \kappa) f(\kappa) \;d\kappa &= \frac{p}{\beta} \\
    \int_{\kappa_0}^{\infty} \log(\kappa / \kappa_0) f(\kappa) \;d\kappa &= \frac{1-p}{\alpha},
\end{align*}
which give us that 
\begin{align*}
    \lambda_3 &= \alpha + 1 \\
    \lambda_2 &= \beta - 1 \\
    \exp(-\lambda_1) &= \frac{p \beta}{(1 - p) \alpha} \\
    \exp(- 1 - \lambda_0) &= \frac{(1- p) \alpha}{\kappa_0}
\end{align*}
and so 
\begin{align*}
    f(\kappa) = \begin{cases}
        \frac{p \beta}{\kappa_0}\Big(\frac{\kappa}{\kappa_0}\Big)^{\beta - 1} &\quad \text{if } \kappa < \kappa_0 \\
        \frac{(1-p) \alpha}{\kappa_0}\Big(\frac{\kappa_0}{\kappa}\Big)^{\alpha + 1} &\quad \text{otherwise}
    \end{cases} .
\end{align*}
Ensuring the continuity condition $\lim_{\kappa \to \kappa_0^{-}} f(\kappa) = f(\kappa_0)$ means that $p = \frac{\alpha}{\alpha + \beta}$, giving the generalized log-Laplace pdf. When $\alpha = \beta$, it becomes the standard log-Laplace.

\subsection{Fixed squared-deviation (log-normal)}
The conditions are $\E_{\kappa_\sim K}[ \log(\kappa/\kappa_0) ] = 0$ and $\E_{\kappa_\sim K}[ ( \log(\kappa/\kappa_0) )^2 ] = \sigma^2$. 
The functional is 
\begin{align*}
    L[f(\kappa), \lambda] = - \int_{0}^{\infty} f(\kappa) \log (f(\kappa)) \;d \kappa &- \lambda_0 \bigg( \int_{0}^{\infty} f(\kappa) \;d\kappa - 1 \bigg) \\
    &- \lambda_1 \bigg( \int_{0}^{\infty} \log(\kappa / \kappa_0) f(\kappa) \;d\kappa  \bigg) \\
    &- \lambda_2 \bigg( \int_{0}^{\infty} (\log(\kappa / \kappa_0))^2 f(\kappa) \;d\kappa  - \sigma^2 \bigg) .
\end{align*}
The main partial derivative is
\begin{align*}
    \frac{\partial L}{\partial f(\kappa)} &= - \log (f(\kappa)) - 1 - \lambda_0 - \lambda_1 \log(\kappa / \kappa_0) - \lambda_2 \big( \log(\kappa / \kappa_0) \big)^2
\end{align*}
giving the pdf: 
\begin{align}
    f(\kappa) = \exp\Big(-1 -\lambda_0\Big) \bigg(\frac{\kappa_0}{\kappa}\bigg)^{\lambda_1}\exp\bigg( -\lambda_2 \big( \log(\kappa / \kappa_0) \big)^2 \bigg),
\end{align}
which is a log-normal distribution when $\lambda_2 = \frac{1}{2\sigma^2}$, $\lambda_1 = 1$, and $\exp(-1 - \lambda_0) = \frac{1}{\kappa_0 \sigma \sqrt{2 \pi}}$. 

\subsection{Bounded support and left tail (peicewise).}
Timeout is bounded in $[0, \kappa_0]$ and $\Pr_{\kappa \sim K}(\kappa > \kappa_1) \ge 1 - \delta$.

The functional is 
\begin{align*}
    L[f(\kappa), \lambda] = - \int_0^{\kappa_0} f(\kappa) \log (f(\kappa)) \;d \kappa &- \lambda_0 \bigg( \int_0^{\kappa_0} f(\kappa) \;d\kappa - 1 \bigg) \\
    &- \lambda_1 \bigg( \int_{\kappa_1}^{\kappa_0} f(\kappa) \;d\kappa -1 + \delta \bigg) 
\end{align*}
The partial derivatives are 
\begin{align*}
    \frac{\partial L}{\partial f(\kappa)} &= - \log (f(\kappa)) - 1 - \lambda_0 - \lambda_1 \ind(\kappa > \kappa_1) \\
    \frac{\partial L}{\partial \lambda_0} &= - \int_0^{\kappa_0} f(\kappa) \;d\kappa + 1 \\
    \frac{\partial L}{\partial \lambda_1} &= \int_{\kappa_1}^{\kappa_0} f(\kappa) \;d\kappa -1 + \delta
\end{align*}
giving 
\begin{align*}
    f(\kappa) = \exp{(- \lambda_0 - 1 - \lambda_1 \ind(\kappa > \kappa_1))}
\end{align*}
with conditions, 
\begin{align*}
    \int_0^{\kappa_0} f(\kappa) \;d\kappa &= 1 \\
    \int_{\kappa_1}^{\kappa_0} f(\kappa) \;d\kappa &= 1 - \delta.
\end{align*}
Solving for $\lambda_0$ and $\lambda_1$ gives 
\begin{align*}
    \exp(-\lambda_0 -1) &= \frac{\delta}{\kappa_1} \\
    \exp(-\lambda_1) &= \bigg(\frac{1 - \delta}{\delta}\bigg) \bigg(\frac{\kappa_1}{\kappa_0 - \kappa_1}\bigg) \\  \\
\end{align*}
and 
\begin{align*}
    f(\kappa) = 
    \begin{cases}
        \frac{\delta}{\kappa_1} &\quad \text{if } t \le \kappa_1 \\
        \frac{1 - \delta}{\kappa_0 - \kappa_1} &\quad \text{otherwise}
    \end{cases}
\end{align*}
and 
\begin{align*}
    F(\kappa) = 
    \begin{cases}
        \delta\frac{t}{\kappa_1} &\quad \text{if } t \le \kappa_1 \\
        \delta + (1 - \delta)\frac{t - \kappa_1}{\kappa_0 - \kappa_1} &\quad \text{otherwise}.
    \end{cases}
\end{align*}

\section{Extending Our Theory to Solution Quality}\label{app:solutionquality}

The extension to include solutions of differing quality is fairly straightforward. We can mean anything we want by ``quality'', as long as we can assign it a numerical value (e.g., mean-squared error on a machine learning model, fuel saved on our delivery route, our subjective rating of the beauty of a generated image, etc.). For our purposes, all that matters is that algorithms are now distributions over \emph{pairs} of numbers $(t, q)$ where $t$ is the runtime and $q \in [q_{-1}, q_1]$ is the quality of the solution returned. We have some default action available to us that gives a solution with quality $q_0$ (maybe this is the loss on the model with the default parameter setting, or our impression of a random white-noise image). This means that we can effectively constrain $q$ to the interval $[q_0, q_1]$, where the worst-quality solution $q_0$ is always available to us. We assume that $q_1 > q_0$.

The statement of the first three axioms is unchanged. The statements of Independence, Eagerness and Relevance do change, but each retains its fundamental interpretation and ultimately plays the same role in the theorem's proof.

\paragraph{Axiom 1 \normalfont{(Transitivity)}.} If $A \prefgteq{K} B$ and $B \prefgteq{K} C$, then $A \prefgteq{K} C$.
\paragraph{Axiom 2 \normalfont{(Monotonicity)}.} If $A \prefgteq{K} B$ then for any $p,q \in [0,1]$ we have ${[p:A , (1-p):B]} \prefgteq{K} {[q:A , (1 - q):B]}$ if and only if $p \ge q$.
\paragraph{Axiom 3 \normalfont{(Continuity)}.} If $A \prefgteq{K} B \prefgteq{K} C$, then there exists a $p \in [0,1]$ such that $B \prefeq{K} {[p:A , (1-p):C]}$.
\paragraph{Axiom 4$^\prime$ \normalfont{(Independence)}.} If $\delta_t \times \delta_q \prefeq{\delta_{\kappa}} M(t, q, \kappa)$ for all $t, q, \kappa$, then $A \prefeq{K} [M(t, q, \kappa) \st (t,q) \sim A, \kappa \sim K]$.
\paragraph{Axiom 5$^\prime$ \normalfont{(Eagerness)}.} For any $t \le t'$ and $q \ge q'$, if the support of $A$ is contained in $[t, t'] \times [q', q]$, then $\delta_t \times \delta_q \prefgteq{K} A \prefgteq{K} \delta_{t'} \times \delta_{q'}$.
\paragraph{Axiom 6$^\prime$ \normalfont{(Relevance)}.} $\delta_t \times \delta_q \prefgt{\delta_{\kappa}} \delta_t \times \delta_{q_0}$ for all $t < \kappa$ and all $q > q_0$.

The function $p$ now takes three arguments, but its interpretation as the ``balance point'' between the best and worst possible outcomes remains the same. 

\begin{definition}\label{def:pquality}
    Set $p(t, q, \kappa) = 0$ if $t \ge \kappa$, and otherwise set $p(t, q, \kappa)$ to be the value that satisfies
    \begin{align}\label{eqn:pf:thm:quality:p}
        \delta_t \times \delta_q \prefeq{\delta_{\kappa}} \Big[ p(t, q, \kappa) :  \delta_0 \times \delta_{q_1} \,,\, \big(1 - p(t, q, \kappa) \big) : \delta_{\kappa} \times \delta_{q_0} \Big].
    \end{align}
Since Eagerness tells us that $\delta_{0} 
\times \delta_{q_1} \prefgteq{\delta_{\kappa}} \delta_{t} \times \delta_{q} \prefgteq{\delta_{\kappa}} \delta_{\kappa} \times \delta_{q_0} $ when $t < \kappa$, Continuity ensures that $p$ exists for any $t < \kappa$ and any $q$, and Monotonicity ensures it is unique. So $p$ is defined for all $t,q$ and $\kappa$. 
\end{definition}

The main theorem of \cref{sec:axioms} (\cref{thm:main}) can now be restated to include solution quality. The function $p$ is monotonically increasing in quality $q$, and for any fixed $q > q_0$ it behaves just as it did in \cref{thm:main}.

\begin{theorem}\label{thm:quality}
    If our preferences follow the axioms as stated in this section, then a function $u$ satisfies 
    \begin{align}\label{eqn:thm:quality:uutil}
        A \;\prefgteq{K}\; B  \iff \E_{(t,q) \sim A, \kappa \sim K}\big[u(t, q, \kappa)\big] \ge \E_{(t,q) \sim B, \kappa \sim K}\big[u(t, q, \kappa)\big],
    \end{align}
for any runtime distributions $A$ and $B$ and any timeout distribution $K$ if and only if there are constants $c_0$ and $c_1 > 0$ such that $u(t, q, \kappa) = c_1 p(t, q, \kappa) + c_0$. Furthermore, $p$ has the form
\begin{enumerate}
    \item $p(0, q_1, \kappa) = 1$ (maximum achieved at $t=0$ and $q = q_1$),
    \item $p(t, q, \kappa) \ge p(t', q, \kappa)$ for all $t \le t'$ and any $q$ (monotonically decreasing in $t$),
    \item $p(t, q, \kappa) \ge p(t, q', \kappa)$ for all $q \ge q'$ and any $t$ (monotonically increasing in $q$), 
    \item $p(t, q, \kappa) > 0$ for all $t < \kappa$, $q > q_0$ (strictly positive if we improve $q$),
    \item $p(\kappa, q, \kappa) = 0$ for any $q$ (minimum always achieved at $t=\kappa$).
\end{enumerate}
\end{theorem}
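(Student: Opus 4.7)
The plan is to mirror the proof of \cref{thm:main} step for step, replacing runtimes $t$ with outcome pairs $(t,q)$ and using \cref{def:pquality} in place of \cref{def:p}. The revised Eagerness and Relevance axioms guarantee that $\delta_0 \times \delta_{q_1}$ and $\delta_\kappa \times \delta_{q_0}$ remain the most- and least-preferred outcomes on the restricted support $[0,\kappa] \times [q_0,q_1]$, so Continuity and Monotonicity again ensure that $p(t,q,\kappa)$ exists and is unique.

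First, given arbitrary $A, B$ and captime distribution $K$, I would define the mapping $M(t,q,\kappa) = [p(t,q,\kappa) : \delta_0 \times \delta_{q_1},\, (1-p(t,q,\kappa)) : \delta_\kappa \times \delta_{q_0}]$ and form the synthetic algorithm $X = [M(t,q,\kappa) \mid (t,q)\sim A,\, \kappa \sim K]$. Since $\delta_t \times \delta_q \prefeq{\delta_\kappa} M(t,q,\kappa)$ by construction, the revised Independence axiom gives $A \prefeq{K} X$, and $X$ collapses to the simple mixture $[p_A : \delta_0 \times \delta_{q_1},\, (1-p_A) : K \times \delta_{q_0}]$ with $p_A = \E_{(t,q)\sim A,\, \kappa \sim K}[p(t,q,\kappa)]$, where $K \times \delta_{q_0}$ denotes the distribution that draws $\kappa \sim K$ and returns $(\kappa, q_0)$. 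Doing the same for $B$ yields the analogous mixture with $p_B$ in place of $p_A$ and the \emph{same} base distributions at both endpoints, so Monotonicity delivers the desired biconditional $A \prefgteq{K} B \iff p_A \geq p_B$. The five shape properties of $p$ then follow just as in \cref{thm:main}: property 1 from Monotonicity against the maximal outcome $\delta_0 \times \delta_{q_1}$; properties 2 and 3 from Eagerness applied to pairs differing only in $t$ or only in $q$; property 4 by chaining revised Relevance with revised Eagerness to obtain $\delta_t \times \delta_q \prefgt{\delta_\kappa} \delta_\kappa \times \delta_{q_0}$ whenever $t<\kappa$ and $q>q_0$; and property 5 is the defining convention.

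The affine-uniqueness direction is essentially unchanged: given any $u$ satisfying \cref{eqn:thm:quality:uutil}, Continuity supplies an $\alpha$ with $A \prefeq{K} [\alpha : \delta_0 \times \delta_{q_1},\, (1-\alpha) : \delta_\kappa \times \delta_{q_0}]$, and applying both the $u$-biconditional and the $p$-biconditional to this equivalence forces $\alpha = \E[p(t,q,\kappa)]$; linearity of expectation then gives $u = c_1 p + c_0$ with $c_1 = \E_{\kappa\sim K}[u(0,q_1,\kappa) - u(\kappa,q_0,\kappa)]$ and $c_0 = \E_{\kappa \sim K}[u(\kappa,q_0,\kappa)]$, and positivity of $c_1$ follows from revised Relevance. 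The main obstacle is a bookkeeping subtlety rather than a conceptual one: the ``worst outcome'' $\delta_\kappa \times \delta_{q_0}$ now varies with $\kappa$, so one must verify that the reduced mixtures for $A$ and $B$ share identical base distributions $K \times \delta_{q_0}$ at their lower endpoints before invoking Monotonicity---which is immediate because both are compounded against the same $K$.
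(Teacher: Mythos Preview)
Your proposal is correct and follows essentially the same route as the paper's proof: define $M(t,q,\kappa)$ via \cref{def:pquality}, invoke Independence to reduce $A$ and $B$ to two-point mixtures over the same extreme outcomes, apply Monotonicity for the biconditional, read off the five shape properties from Eagerness and Relevance, and finish with the standard affine-uniqueness argument.

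The one small discrepancy is in the uniqueness step. The paper anchors the worst outcome at the \emph{fixed} point $\delta_\infty \times \delta_{q_0}$, giving $c_0 = \E_{\kappa\sim K}[u(\infty,q_0,\kappa)]$ and $c_1 = \E_{\kappa\sim K}[u(0,q_1,\kappa)-u(\infty,q_0,\kappa)]$. Your choice of $\delta_\kappa \times \delta_{q_0}$ (equivalently $K\times\delta_{q_0}$) also works, but then the runtime draw and the captime draw from $K$ are \emph{independent}, so the correct constant is $c_0 = \E_{t\sim K,\,\kappa\sim K}[u(t,q_0,\kappa)]$ rather than the single-variable $\E_{\kappa\sim K}[u(\kappa,q_0,\kappa)]$ you wrote. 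The paper's use of $\delta_\infty$ sidesteps this coupling issue entirely and is the cleaner choice.
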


\begin{proof}
Given an arbitrary runtime distribution $A$ and a timeout distribution $K$, we will construct a new synthetic algorithm $X$ that returns an answer either instantaneously or after some amount of time sampled from $K$. Formally, 
\begin{align}\label{eqn:pf:thm:quality:x1}
    X = \bigg[ \Big[ p(t, q, \kappa) :  \delta_0 \,,\, \big(1 - p(t, q, \kappa) \big) : \delta_{\kappa} \Big] \;\big|\; (t,q) \sim A, \kappa \sim K \bigg] .
\end{align}
Where $p$ is defined in \cref{def:pquality}. Setting 
\begin{align*}
    p_A &= \int_{\kappa} \int_{t, q} p(t, q, \kappa) dF_A(t, q) dF_{K}(\kappa) \\
    &= \E_{(t, q) \sim A, \kappa \sim K}\big[ p(t, q, \kappa) \big],
\end{align*}
we can write $X$'s runtime  distribution as \begin{align}\label{eqn:pf:thm:quality:x2}
    X \;=\; \big[ p_A : \delta_0 \,,\, (1 - p_A) : K \big].
\end{align}
Consider the function $M(t, q, \kappa) = [p(t, q, \kappa) :  \delta_0 \times \delta_{q_1} \,,\, (1 - p(t, q, \kappa) ) : \delta_{\kappa} \times \delta_{q_0}]$ that maps runtime--quality--captime triplets to mixture distributions. Since $p$ was defined in \cref{eqn:pf:thm:quality:p} so that $\delta_{t} \times \delta_{q} \prefeq{\delta_{\kappa}} M(t, q, \kappa)$, we can conclude from Independence that 
\begin{align}\label{eqn:pf:thm:quality:akx}
    A \prefeq{K} \bigg[ \Big[ p(t, q, \kappa) :  \delta_0 \times \delta_{q_1} \,,\, \big(1 - p(t, q, \kappa) \big) : \delta_{\kappa} \times \delta_{q_0} \Big] \;\big|\; (t, q) \sim A, \kappa \sim K \bigg].
\end{align}
\cref{eqn:pf:thm:quality:x1,eqn:pf:thm:quality:x2,eqn:pf:thm:quality:akx} together then give that 
    \begin{align}
        A \prefeq{K} \big[ p_A : \delta_0 \times \delta_{q_1} \,,\, (1 - p_A) : K \big].
    \end{align}
Now consider a second algorithm $B$, and define $Y$ and $p_B$ analogously to $X$ and $p_A$, but with $B$ in place of $A$, so that by the same argument we have
    \begin{align}
        B \prefeq{K} \big[ p_B : \delta_0 \times \delta_{q_1} \,,\, (1 - p_B) : K \times \delta_{q_0} \big].
    \end{align}
Since $\delta_0 \times \delta_{q_1} \prefgteq{K} K \times \delta_{q_0}$ by Eagerness, Monotonicity tells us that $\big[ p_A : \delta_0 \times \delta_{q_1} \,,\, (1 - p_A) : K \times \delta_{q_0} \big] \prefgteq{K} \big[ p_B : \delta_0 \times \delta_{q_1} \,,\, (1 - p_B) : K \times \delta_{q_0} \big]$ iff $p_A \ge p_B$, and thus
\begin{align}\label{eqn:pf:thm:quality:putil}
    A \prefgteq{K} B \iff \E_{(t, q) \sim A, \kappa \sim K} \big[ p(t, q, \kappa) \big] \ge \E_{(t, q) \sim B, \kappa \sim K} \big[ p(t, q, \kappa) \big] .
\end{align}
So the function $p$ can serve as a utility function, and we can use the biconditional \cref{eqn:pf:thm:quality:putil} to infer certain aspects of $p$'s form: 

\begin{enumerate}
    \item By definition of $p$ we have $\delta_0 \times \delta_{q_1} \prefeq{\delta_{\kappa}} \big[ p(0, q_1, \kappa) : \delta_0 \times \delta_{q_1} \,,\, (1 - p(0, q_1, \kappa)) : \delta_{\kappa} \times \delta_{q_0} \big]$, where $p(0, q_1, \kappa) \le 1$, and by Eagerness we have $\delta_0 \times \delta_{q_1} \prefgteq{\delta_{\kappa}} \delta_{\kappa} \times \delta_{q_0}$ so applying Monotonicity with $A=\delta_0 \times \delta_{q_1}, B=\delta_{\kappa} \times \delta_{q_0}$ and $q = 1$, we have that $p(0, q_1, \kappa) \ge 1$, and thus $p(0, q_1, \kappa) = 1$. 
    \item For any $t \le t' < \kappa$ and any $q$, Eagerness tells us that $\delta_t \times \delta_{q} \prefgteq{\delta_{\kappa}} \delta_{t'} \times \delta_{q}$, and so $p(t, q, \kappa) \ge p(t', q, \kappa)$.
    \item For any $q \ge q'$ and any $t < \kappa$, Eagerness tells us that $\delta_t \times \delta_{q} \prefgteq{\delta_{\kappa}} \delta_{t} \times \delta_{q'}$, and so $p(t, q, \kappa) \ge p(t, q', \kappa)$.
    \item Relevance states that $\delta_t \times \delta_q \prefgt{\delta_{\kappa}} \delta_t \times \delta_{q_0}$ for all $t < \kappa$ and all $q > q_0$, and Eagerness says that $\delta_t \times \delta_{q_0} \prefgt{\delta_{\kappa}} \delta_{\kappa} \times \delta_{q_0}$, and so $p(t, q, \kappa) > p(\kappa, q, \kappa) = 0$.
    \item By definition, $p(\kappa, q, \kappa)$ is set to $0$ for all $q$.
\end{enumerate}

So $p$ has the given form and can serve as a utility function. We can now show that a function $u$ satisfies \cref{eqn:thm:quality:uutil} if and only if it has the form $u(t, q, \kappa) = c_1 p(t, q, \kappa) + c_0$ for some $c_1> 0$ and $c_0$. The reverse, `only if' direction follows immediately from linearity of expectation. For the forward, `if' direction, suppose that $u$ does satisfy \cref{eqn:thm:quality:uutil} for all $A, B$ and $K$. Since $\delta_0 \times \delta_{q_1} \prefgteq{K} A \prefgteq{K} \delta_\infty \times \delta_{q_0}$ by Eagerness, Continuity says that there exists a constant $\alpha$ such that $A \prefeq{K} \big[ \alpha : \delta_0 \times \delta_{q_1} , (1 - \alpha) : \delta_{\infty} \times \delta_{q_0} \big]$. Using this equivalence, \cref{eqn:pf:thm:quality:putil} tells us that
\begin{align*}
    \E_{(t,q) \sim A, \kappa \sim K} \big[ p(t, q, \kappa) \big] = \alpha,
\end{align*}
and applying \cref{eqn:thm:main:uutil} to $A$ and the equivalent mixture $\big[ \alpha : \delta_0 \times \delta_{q_1}, (1 - \alpha) : \delta_{\infty} \times \delta_{q_0}\big]$ tells us that 
\begin{align*}
    \E_{(t, q) \sim A, \kappa \sim K} \big[ u(t, q, \kappa) \big] &= \alpha \E_{\kappa \sim K} \big[ u(0, q_1, \kappa) \big] + (1 - \alpha) \E_{\kappa \sim K} \big[ u(\infty, q_0, \kappa) \big] \\
    &= \alpha \big( \E_{\kappa \sim K} \big[ u(0, q_1, \kappa) - u(\infty, q_0, \kappa) \big] \big) + \E_{\kappa \sim K} \big[ u(\infty, q_0, \kappa) \big],
\end{align*}
so setting $c_1= \E_{\kappa \sim K} \big[ u(0, q_1, \kappa) - u(\infty, q_0 \kappa) \big]$ and $c_0 = \E_{\kappa \sim K} \big[ u(\infty, q_0, \kappa) \big]$ we have that
\begin{align*}
    \E_{(t,q) \sim A, \kappa \sim K} \big[ u(t, q, \kappa) \big] = c_1 \E_{(t,q) \sim A, \kappa \sim K} \big[ p(t, q, \kappa) \big] + c_0
\end{align*}
for any $A$ and $K$. In particular, when $A = \delta_t$ and $K = \delta_{\kappa}$ for arbitrary $t$ and $\kappa$ we get that $u(t, q, \kappa) = c_1 p(t, q, \kappa) + c_0$, which completes the proof.
\end{proof}


\end{document}